\documentclass{article}

\PassOptionsToPackage{numbers}{natbib}


\usepackage[preprint]{neurips_2025}



\usepackage[utf8]{inputenc} 
\usepackage[T1]{fontenc}    
\usepackage{hyperref}       
\usepackage{url}            
\usepackage{booktabs}       
\usepackage{amsfonts}       
\usepackage{nicefrac}       
\usepackage{microtype}      
\usepackage{xcolor}         
\usepackage{graphicx} 
\usepackage{algorithm}
\usepackage{algpseudocode}
\usepackage{amsmath, amsthm}
\usepackage{amsfonts}
\usepackage{amssymb}
\usepackage{booktabs}
\usepackage{multirow}
\usepackage{graphicx}
\usepackage{adjustbox}
\usepackage{svg}
\usepackage{wrapfig}
\usepackage{subcaption}
\usepackage{floatflt}
\usepackage{enumitem}
\usepackage{placeins}
\usepackage{titlesec}

\theoremstyle{definition}    
\newtheorem{theorem}{Theorem}
\algnewcommand{\CommonCall}[2]{\textsc{#1}\textnormal{\small(}#2\textnormal{\small)}}
\theoremstyle{definition}  

\newtheorem{lemma}[theorem]{Lemma}    
\newtheorem{corollary}{Corollary}[theorem] 

\title{Synergizing Reinforcement Learning and Genetic Algorithms for Neural Combinatorial Optimization}

%

\author{
  Shengda Gu\textsuperscript{1,2} \quad 
  Kai Li\textsuperscript{1,2}\thanks{Corresponding author.} \quad 
  Junliang Xing\textsuperscript{4} \quad 
  Yifan Zhang\textsuperscript{1,3} \quad 
  Jian Cheng\textsuperscript{1,3} \\
  \textsuperscript{1}C\textsuperscript{2}DL, Institute of Automation, Chinese Academy of Sciences \\
   \textsuperscript{2}School of Artificial Intelligence, University of Chinese Academy of Sciences\\
  \textsuperscript{3}School of Advanced Interdisciplinary Sciences, University of Chinese Academy of Sciences \\
  \textsuperscript{4}Department of Computer Science and Technology, Tsinghua University \\
}

\setlength{\columnsep}{6pt}
\begin{document}

\maketitle

\begin{abstract}
Combinatorial optimization problems are notoriously challenging due to their discrete structure and exponentially large solution space. Recent advances in deep reinforcement learning (DRL) have enabled the learning heuristics directly from data. However, DRL methods often suffer from limited exploration and susceptibility to local optima. On the other hand, evolutionary algorithms such as Genetic Algorithms (GAs) exhibit strong global exploration capabilities but are typically sample inefficient and computationally intensive. In this work, we propose the Evolutionary Augmentation Mechanism (EAM), a general and plug-and-play framework that synergizes the learning efficiency of DRL with the global search power of GAs. EAM operates by generating solutions from a learned policy and refining them through domain-specific genetic operations such as crossover and mutation. These evolved solutions are then selectively reinjected into the policy training loop, thereby enhancing exploration and accelerating convergence. We further provide a theoretical analysis that establishes an upper bound on the KL divergence between the evolved solution distribution and the policy distribution, ensuring stable and effective policy updates. EAM is model-agnostic and can be seamlessly integrated with state-of-the-art DRL solvers such as the Attention Model, POMO, and SymNCO. Extensive results on benchmark problems—including TSP, CVRP, PCTSP, and OP—demonstrate that EAM significantly improves both solution quality and training efficiency over competitive baselines.
\end{abstract}

\section{Introduction}
\label{sec:intro}
Combinatorial Optimization Problems (COPs) are fundamental decision-making and optimization tasks in discrete spaces, with wide-ranging applications in domains such as logistics scheduling~\citep{xu2024optimization, ping2022application}, chip design~\citep{deliparaschos2018design, sha2024time}, and financial investment~\citep{luu2022application,fekri2019designing}. Due to their NP-hard nature, finding optimal solutions is often computationally expensive, posing significant challenges for real-time applications. As a result, efficiently obtaining high-quality approximate solutions has become a central goal in both academic research and engineering practice. Traditional heuristic approaches~\citep{helsgaun2017extension, vidal2022hybrid, lourencco2018iterated} typically rely on substantial expert knowledge, making them inflexible and difficult to generalize across different problems. These limitations have led to increasing interest in automated, data-driven solution paradigms that aim to reduce manual design effort and improve scalability.

In recent years, DRL has emerged as a promising framework for Neural Combinatorial Optimization (NCO), enabling policies to learn solution construction strategies in an end-to-end manner. DRL-NCO methods can be broadly categorized into Learning-to-Improve (L2I)~\citep{ahn2020learning,d2020learning,wu2021learning,chen2019learning,ma2021learning,kim2021learning} and Learning-to-Construct (L2C)~\citep{wu2024neural, bello2016neural, grinsztajn2023winner, hottung2024polynet, kim2022sym, kool2018attention, kwon2020pomo, sun2024learning, wang2024leader} approaches. Among them, L2C methods have been extensively studied due to strong empirical performance. However, L2C methods suffer from notable limitations: their autoregressive generation mechanism restricts the ability to alter or refine previously constructed partial solutions, often leading to suboptimal solutions. Additionally, the sparse reward structure typical of many COPs severely hampers efficient policy training.

To address these challenges—particularly the limited exploration capacity of standard policy learning—it is essential to introduce new learning mechanisms. Evolutionary Algorithms, especially Genetic Algorithms (GAs), are also used for solving COPs~\citep{uray2023csrx, mahmoudinazlou2024hybrid, alkafaween2024efficiency, vidal2022hybrid}. GAs offer strong global exploration capabilities via population-based search and genetic operators such as crossover and mutation. However, their lack of gradient-based guidance often results in sample inefficiency and high computational overhead when used in isolation. This contrast reveals the potential for a complementary integration: instead of replacing RL, GA can act as an auxiliary component to improve exploration and enhance sample quality within a unified framework.

Motivated by these observations, we introduce the Evolutionary Augmentation Mechanism (EAM), a hybrid framework that integrates RL with GA to jointly improve training efficiency and solution quality (Fig.~\ref{fig:overall_framework}). EAM first generates an initial solution set by sampling from the RL policy and then applies GA to iteratively evolve them, yielding solutions that are structurally diverse and of higher quality. These evolved solutions are subsequently combined with policy-generated samples to update the policy. This closed-loop process—wherein the policy accelerates GA and evolved outputs progressively refine the RL policy—fosters mutual reinforcement. Specifically, the policy provides well-structured initial solutions that serve as strong baselines for GA, improving its efficiency. GA, in contrast, yields local structural optimizations that are often beyond the reach of the autoregressive policy, thereby enriching the training data with diverse and exploratory feedback samples.

While incorporating GA enhances sample quality, it may introduce distributional biases to on-policy learning, potentially compromising gradient estimation. To quantify this effect, we theoretically model the distributional divergence between evolved solutions and policy samples, using the KL divergence to analyze its impact on policy gradient estimation. This theoretical foundation underpins the stability analysis of EAM. Building upon this insight, we propose a task-aware evolutionary hyperparameter selection strategy, which adapts selection, crossover, and mutation rates based on structural characteristics of specific COPs. By balancing perturbation intensity and distributional alignment, this strategy enhances learning stability and ensures robust generalization across tasks.

\begin{figure}[t]
    \centering
    \includegraphics[width=0.95\textwidth]{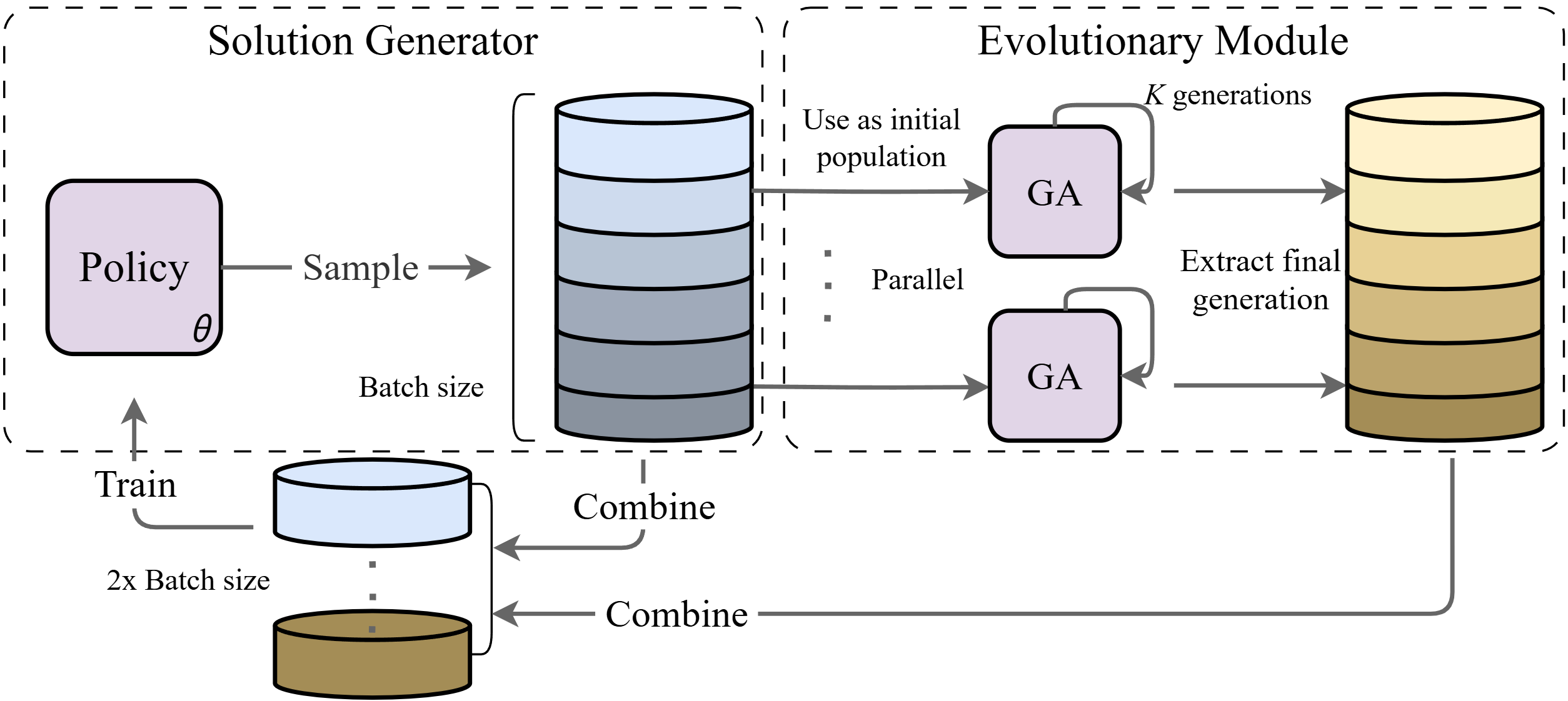}
    \caption{\textbf{An overview of the proposed Evolutionary Augmentation Mechanism (EAM).} Policy-sampled solutions are used to initialize the population of the Genetic Algorithm. The evolved solutions are then merged with the original samples and jointly used to train the policy network, forming a closed-loop learning and evolutionary framework.}
    \label{fig:overall_framework}
\end{figure}

We conduct extensive evaluations of EAM on four representative COPs: the Traveling Salesman Problem (TSP), Capacitated Vehicle Routing Problem (CVRP), Prize-Collecting TSP (PCTSP)~\citep{balas1989prize}, and Orienteering Problem (OP)~\citep{golden1987orienteering}. Experimental results demonstrate that EAM consistently improves the performance of various DRL-NCO solvers across diverse task structures and scales. EAM enhances solution quality while also facilitating faster policy convergence. When integrated into solvers such as AM, POMO, and SymNCO, EAM achieves faster convergence on CVRP tasks of varying scales, demonstrating its practicality as a model-agnostic policy enhancement module.

\section{Preliminaries}
\label{sec:preliminaries}
Solutions to COPs such as TSP, CVRP, PCTSP, and OP can be naturally represented as sequences of node visits. Formally, we define a solution (trajectory) \( \boldsymbol{\boldsymbol{\tau}} = [\tau_1, \tau_2, \ldots, \tau_n] \), where each \( \tau_i\) denotes the index of a node visited at step \( i \). The structure of \( \boldsymbol{\boldsymbol{\tau}} \) varies depending on the specific problem: in TSP, all nodes must be visited exactly once; in CVRP, the solution must visit each customer exactly once, and ensure that the total demand between depot returns does not exceed vehicle capacity; in PCTSP~\citep{balas1989prize} and OP~\citep{golden1987orienteering}, \textbf{only a subset of nodes} is visited. In PCTSP, the goal is to collect at least a minimum required prize while minimizing the travel cost, whereas in OP, the objective is to maximize the collected prize under a travel budget constraint. The reward function $R(\boldsymbol{\tau})$ is defined in accordance with the optimization objective, where higher rewards correspond to better solutions, such as shorter total distances or higher collected prizes. 

\section{Method}
EAM consists of two tightly integrated components: (1) a \textit{solution generator} that constructs feasible solutions via RL and (2) an \textit{evolutionary module} that refines these solutions using task-specific genetic operators. We next describe the two main components of our framework and present the theoretical foundation for setting the evolutionary hyperparameters within the augmentation mechanism.

\subsection{Solution Generators}
Under the L2C paradigm, solving COPs is commonly modeled as a sequential decision-making process~\citep{vinyals2015pointer}, where a parameterized policy incrementally constructs a feasible solution. Given an input instance $s$, the solution distribution is defined in an autoregressive form: $p(\boldsymbol{\boldsymbol{\tau}}| s) = \prod_{t=1}^T p(\tau_t|\boldsymbol{\tau}_{1:t-1}, s)$. We adopt a Transformer-based encoder-decoder architecture consistent with AM~\citep{kool2018attention} and POMO~\citep{kwon2020pomo}.

While this modeling strategy provides strong expressiveness and flexibility, it also imposes structural limitations: the autoregressive mechanism restricts the ability to revise early decisions. After a node is selected, it is permanently incorporated into the constructed solution. Consequently, the model cannot revise earlier decisions even when subsequent information reveals them to be suboptimal. This irrevocability leads to an accumulation of early-stage errors that degrade overall solution quality. Moreover, sparse rewards, which are only revealed upon completion of a full solution, make it difficult for the model to assign credit to individual actions, thereby hindering effective policy learning.

\subsection{Evolutionary Module}

To address the structural limitations of autoregressive policies, an evolutionary module based on GAs is introduced. This module provides a gradient-free optimization framework capable of generating improved solutions. Specifically, GAs iteratively evolve a \textit{population} composed of candidate solutions, each regarded as an \textit{individual}. At each generation, a subset of parent population is selected according to \textit{fitness} values, with higher-quality solutions assigned proportionally greater fitness. After selection, \textit{crossover} operators combine parts of two parent solutions to generate offspring that inherit structural traits from both; \textit{mutation} operators, in contrast, introduce small, localized changes to individual solutions. Based on these properties of GAs, the evolutionary module confers two key advantages: (1) genetic operators perturb entire solutions holistically rather than assembling them incrementally, thereby circumventing the detrimental influence of suboptimal partial trajectories; (2) by operating at the solution level, these operators preserve structural integrity and remain intrinsically resilient to issues arising from delayed or sparse rewards.

Despite these advantages, GAs exhibit notable limitations: (1) they lack explicit gradient-based learning signals, relying instead on heuristic selection and elimination to improve population fitness; (2) the evolved population is instance-specific and requires reinitialization for each new problem instance. In contrast, autoregressive policies, with their strong generalization across instances and efficient gradient-based adaptation, can accelerate and stabilize the search dynamics of GAs.

These properties motivate a hybrid design that integrates evolutionary exploration with gradient-based policy optimization. In the following section, we describe integration of two components.

\subsection{Evolutionary Augmentation Mechanism}

To harness the complementarity between autoregressive policies and global evolutionary search, we propose the Evolutionary Augmentation Mechanism (EAM), a closed-loop framework integrating evolution and policy learning. At each training iteration, with a predefined probability, a batch of trajectories $\mathrm{P}_\theta$ is sampled from the L2C-based policy network and used as the initial population for the GA, which evolves over $K$ generations to produce $\mathrm{P}^{(K)}$. The combined set $\mathrm{P}_\theta \cup \mathrm{P}^{(K)}$ is then used to update the policy via RL. 

EAM creates a synergistic mechanism that integrates \textit{evolution-guided policy learning} with \textit{policy-acclerated evolutionary search}.
\textbf{1) From the RL perspective}, GA-driven perturbations in EAM act as exogenous structural augmentation, exposing the policy to high-quality samples that deviate from its autoregressive patterns. These augmentations provide strong learning signals that guide the policy beyond its current solution pattern. Moreover, since GAs optimize complete trajectories directly, EAM inherently mitigates the sparsity of reward signals—particularly beneficial during early RL training stages. \textbf{2) From the GA perspective}, EAM initializes its population from the policy-generated trajectories, avoiding inefficiency of random initialization. As training progresses, the genetic algorithm becomes increasingly effective at discovering globally better solutions, as the policy network provides higher-quality initial populations to guide evolutionary search.

To ensure that evolved trajectories are both feasible and efficient, we choose and design genetic operators under two key criteria: (1) they must satisfy the structural constraints of the problem (e.g., tour completeness in TSP, capacity limits in CVRP), and (2) they must be computationally lightweight to support frequent execution during training. Specifically:

\begin{itemize}[leftmargin=*, itemsep=0pt, topsep=2pt]
    \item \textbf{Selection Operator.} To ensure efficient computation and maintain optimization effectiveness, we adopt an \textit{elitist selection strategy}~\citep{de1975analysis} in which crossover and mutation are applied only to a selected subset of top-performing individuals. This design minimizes the number of costly fitness evaluations while directing the search toward high-quality regions of the solution space.
    \item \textbf{Crossover Operator.} We employ \textit{Order Crossover} (OX)~\citep{davis1991handbook} for all problems studied in this work, due to its natural compatibility with node sequence representations. OX preserves a contiguous subsequence from one parent and fills the remainder according to the relative order in the second parent, ensuring structural feasibility. Compared to alternatives like \textit{Partially Mapped Crossover} (PMX)~\citep{goldberg1985alleleslociand}, which requires auxiliary mapping structures, OX provides better runtime efficiency and finer-grained control over perturbation locality.
    \item \textbf{Mutation Operator.} We use task-specific mutation operators. In TSP, CVRP, and PCTSP, we apply the classical \textit{2OPT}~\citep{croes1958method} edge exchange to remove edge crossings and shorten tour length with linear complexity. For OP, we design a heuristic node substitution strategy that replaces low-reward nodes with high-value candidates under a route length budget. These designs reflect our principle of maintaining feasibility while introducing meaningful variations that guide policy learning.
\end{itemize}

\subsection{Theoretical Foundations and Practical Implementation}
\label{sec:theorem}
EAM is designed as a general-purpose, lightweight, and plug-and-play enhancement module intended to improve learning efficiency and solution quality. It can be seamlessly integrated into mainstream L2C frameworks without necessitating any modifications to the original model architecture or optimization objectives. This seamless integration demands strong compatibility from EAM. However, by introducing evolutionary perturbations during policy training, EAM violates the core assumptions underpinning the REINFORCE algorithm, which is commonly used in the L2C methods.
Specifically, standard REINFORCE~\citep{williams1992simple} assumes that all training trajectories are sampled from the current policy distribution $p_\theta$, ensuring unbiased gradient estimates (i.e. $\mathbb{E}_{\boldsymbol{\boldsymbol{\tau}} \sim p_\theta}\big[ \nabla_\theta \log p(\boldsymbol{\boldsymbol{\tau}}) R(\boldsymbol{\boldsymbol{\tau}}) \big] = \nabla_\theta J(\boldsymbol{\tau})$). However, EAM applies genetic perturbations (crossover, mutation) to trajectories sampled from $p_\theta$ (i.e. population distribution at the $0$-th generation $p(\boldsymbol{\tau}_0) = p_\theta$), yielding an population distribution at the $K$-th generation $p(\boldsymbol{\tau}_K) \neq p_\theta$ and introducing bias: $\mathbb{E}_{\boldsymbol{\boldsymbol{\tau}} \sim p{(\boldsymbol{\boldsymbol{\tau}}_K)}}\big[ \nabla_\theta \log p(\boldsymbol{\boldsymbol{\tau}}) R(\boldsymbol{\boldsymbol{\tau}}) \big] \neq \nabla_\theta J(\boldsymbol{\tau})$. To quantify the effect of such distributional shifts, we model the gradient bias using the KL divergence between $p(\boldsymbol{\tau}_K)$ and $p_\theta$ (i.e., $p(\boldsymbol{\tau}_0)$). The following theorem provides an upper bound:

\begin{theorem}[Policy Gradient Difference Upper Bound]
\label{thm:main}
Let $p(\boldsymbol{\tau}_K)$ be the distribution after $K$ generations of evolution with crossover rate $\alpha$, mutation rate $\beta$, and selection rate $\rho$. Then:
\begin{align}
    \Big\|\mathbb{E}_{\boldsymbol{\tau}_K \sim p(\boldsymbol{\tau}_K)}\big[\nabla J(\boldsymbol{\tau}_K)\big] - \mathbb{E}_{\boldsymbol{\tau}_0 \sim p(\boldsymbol{\tau}_0)}\big[\nabla J(\boldsymbol{\tau}_0)\big]\Big\|_2 \leq \sqrt{2 D_{\text{KL}}(p(\boldsymbol{\tau}_K)\|p(\boldsymbol{\tau}_0))},
\end{align}
\begin{align}
    D_{\text{KL}}&(p(\boldsymbol{\tau}_K)\|p(\boldsymbol{\tau}_0)) \leq \nonumber\\ &\rho K \Bigg(\alpha \mathbb{E}_{p(f_{\text{cross}})}\bigg[\log \frac{\max{p(r_{\text{cross}}|f_{\text{cross}})}}{\min p(r_{\text{cross}}|f_{\text{cross}})}\bigg] + \beta \mathbb{E}_{p(f_{\text{mutate}})}\bigg[\log\frac{\max p(r_{\text{mutate}}|f_{\text{mutate}})}{\min p(r_{\text{mutate}}|f_{\text{mutate}})}\bigg]\Bigg),
\end{align}

\smallskip
\noindent
where $f_{\text{mutate}}$ and $f_{\text{cross}}$ denote the preserved fragments during mutation and crossover, respectively, while $r_{\text{mutate}}$ and $r_{\text{cross}}$ denote the filled-in segments generated to complete the solutions. Please refer to Appendix~\ref{app:proof} for the proof.

\end{theorem}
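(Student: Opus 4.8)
The plan is to prove the two displayed inequalities separately: the first is a Pinsker-type reduction, and the second is a generation-by-generation accounting of the relative entropy injected by the genetic operators. For the first inequality I would write the difference of expectations as an integral against the signed measure $p(\boldsymbol{\tau}_K)-p(\boldsymbol{\tau}_0)$, i.e. $\mathbb{E}_{p(\boldsymbol{\tau}_K)}[\nabla J(\boldsymbol{\tau})]-\mathbb{E}_{p(\boldsymbol{\tau}_0)}[\nabla J(\boldsymbol{\tau})]=\sum_{\boldsymbol{\tau}}\big(p(\boldsymbol{\tau}_K{=}\boldsymbol{\tau})-p(\boldsymbol{\tau}_0{=}\boldsymbol{\tau})\big)\,g(\boldsymbol{\tau})$, where $g(\boldsymbol{\tau})=\nabla_\theta\log p_\theta(\boldsymbol{\tau})\,R(\boldsymbol{\tau})$ is the per-trajectory REINFORCE term. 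Triangle inequality plus Hölder then give $\|\cdot\|_2\le\big(\sup_{\boldsymbol{\tau}}\|g(\boldsymbol{\tau})\|_2\big)\,\|p(\boldsymbol{\tau}_K)-p(\boldsymbol{\tau}_0)\|_1=2\big(\sup_{\boldsymbol{\tau}}\|g(\boldsymbol{\tau})\|_2\big)\,D_{\mathrm{TV}}\big(p(\boldsymbol{\tau}_K),p(\boldsymbol{\tau}_0)\big)$, and under the normalization that the score--reward product has unit-bounded norm (or carrying that constant explicitly) Pinsker's inequality $D_{\mathrm{TV}}\le\sqrt{\tfrac12 D_{\mathrm{KL}}}$ yields exactly $\sqrt{2\,D_{\mathrm{KL}}(p(\boldsymbol{\tau}_K)\|p(\boldsymbol{\tau}_0))}$.

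For the second inequality I would model one generation as a stochastic map and track $D_{\mathrm{KL}}(p(\boldsymbol{\tau}_{k+1})\|p(\boldsymbol{\tau}_0))$. Elitist selection restricts reproduction to the top-$\rho$ fraction, so the next-generation law is a mixture that leaves a $(1-\rho)$ mass untouched and, within the remaining $\rho$ mass, applies crossover with probability $\alpha$ and mutation with probability $\beta$: schematically $p(\boldsymbol{\tau}_{k+1})=(1-\rho)\,p(\boldsymbol{\tau}_k)+\rho\big(\alpha\,p_{\mathrm{cross}}+\beta\,p_{\mathrm{mut}}+(1-\alpha-\beta)\,p_{\mathrm{copy}}\big)$. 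Convexity of $D_{\mathrm{KL}}(\cdot\,\|\,\text{reference})$ in its first argument peels off the weights $\rho,\alpha,\beta$ and reduces the per-generation increment to $\rho\big(\alpha\,D_{\mathrm{KL}}(p_{\mathrm{cross}}\|\cdot)+\beta\,D_{\mathrm{KL}}(p_{\mathrm{mut}}\|\cdot)\big)$; summing these increments over the $K$ generations produces the factor $K$.

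It remains to bound a single crossover (resp. mutation) conditional-KL term by $\mathbb{E}_{p(f_{\mathrm{cross}})}\big[\log\frac{\max p(r_{\mathrm{cross}}|f_{\mathrm{cross}})}{\min p(r_{\mathrm{cross}}|f_{\mathrm{cross}})}\big]$. Here the operator structure is essential: each offspring is the concatenation of a preserved fragment $f$ and a resampled completion $r$, so its law factorizes as $p(f)\,p(r\mid f)$ with the fragment marginal inherited unchanged from the current population, and the relative entropy collapses to $\mathbb{E}_{f}\big[D_{\mathrm{KL}}(p(r\mid f)\,\|\,p_{\text{old}}(r\mid f))\big]$ over the resampled coordinates only; bounding each inner conditional KL by the log-ratio of the largest to smallest completion probability (the dynamic range of the fill-in distribution) and taking the expectation over $f$ gives the crossover term, the identical argument for $2$-opt / node-substitution mutation gives the mutation term, and multiplying by the accumulated weight $\rho K$ closes the bound. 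I expect the main obstacles to be exactly the two soft spots here: (i) making the per-generation KL genuinely additive across the $K$ steps rigorous despite $D_{\mathrm{KL}}$ having no triangle inequality — I would route this through a chain-rule identity for the joint law of $(\boldsymbol{\tau}_0,\dots,\boldsymbol{\tau}_K)$ together with the data-processing inequality — and (ii) justifying the clean fragment-marginal factorization and the log-range bound on the inner KL once the population has drifted away from $p_\theta$ after the first generation, so that the ``same fragment, different completion'' comparison leaves no residual cross terms.
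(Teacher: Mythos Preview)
Your proposal is correct and follows essentially the same route as the paper: the first inequality is obtained exactly as you describe via the expectation--TV bound plus Pinsker (with the unit bound coming from $L_2$ gradient clipping), and the second is handled by the same convexity-of-KL decomposition of the elitism/crossover/mutation mixture together with the fragment factorization and log-range bound on the conditional completion distribution. Your two anticipated ``soft spots'' are precisely the points the paper isolates into separate lemmas---the $K$-step accumulation is handled by constructing an i.i.d.-from-$p_0$ reference chain and invoking chain rule plus data-processing, and the fragment bound is justified by showing the genetic operator acts as a support-preserving deterministic map $g_f$ on $r$ given $f$, so that $q(r\mid f)$ is a rearrangement of $p(r\mid f)$ and the $\log(M_f/m_f)$ bound goes through.
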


This result highlights the need to constrain perturbation strength to maintain policy stability. Guided by this, we adopt a task-aware evolutionary hyperparameter selection strategy based on KL divergence:
\begin{itemize}[leftmargin=*, itemsep=0pt, topsep=2pt]
    \item \textbf{TSP, CVRP, and PCTSP}. Due to their high sensitivity to node sequences, even small perturbations may cause significant KL shifts. To mitigate such instability, we apply moderate and reasonably balanced selection and mutation rates to stabilize the trajectory distribution.
    \item \textbf{OP}. As OP requires selecting a subset of nodes under a path length constraint, it is inherently more difficult to solve than the other COPs studied in this work.
    Combined with its sparse reward structure, this makes the policy more prone to getting stuck in local optima. We increase heuristic mutation strength to escape such stagnation and reduce crossover frequency to maintain stability.

    \item \textbf{General Scheduling}. As Theorem~\ref{thm:main} suggests, as the policy becomes more deterministic, the ratios $\frac{\max p(r_{\text{cross}}|f_{\text{cross}})}{\min p(r_{\text{cross}}|f_{\text{cross}})}$ and $\frac{\max p(r_{\text{mutate}}|f_{\text{mutate}})}{\min p(r_{\text{mutate}}|f_{\text{mutate}})}$ gradually increase, leading to a larger KL divergence between evolutionary samples and the policy distribution. Therefore, we apply a simulated annealing schedule to progressively decay genetic operation probabilities and gradually fade out perturbations in the later stages of training to ensure stability.
\end{itemize}

This KL-guided adaptation improves sample quality and exploration while preserving training stability across diverse COPs. Full task-aware settings are provided in Appendix~\ref{app:implement_details}.

\section{Experiments}

\subsection{Experimental Settings}
\label{sec:experment_settings}
\textbf{Tasks.} We evaluate EAM on four representative COPs: TSP, CVRP, PCTSP, and OP (see Section~\ref{sec:preliminaries}). EAM is integrated into AM~\citep{kool2018attention}, POMO~\citep{kwon2020pomo}, and Sym-NCO~\citep{kim2022sym} for TSP and CVRP, and into AM alone for PCTSP and OP, due to model compatibility and hardware constraints. TSP and CVRP with $N = 50$ and $100$ are used to evaluate cross-model and cross-scale generalization. PCTSP and OP test EAM's performance on tasks with partial routing and complex constraints.

\textbf{Baselines.} We compare against both traditional solvers and DRL-based methods. For TSP, we consider Concorde~\citep{applegate2006traveling}, LKH-3~\citep{helsgaun2017extension}, AM, POMO, and Sym-NCO; for CVRP, we use HGS~\citep{vidal2022hybrid}, LKH-3, AM, POMO, and Sym-NCO; for PCTSP, we benchmark against ILS~\citep{lourencco2018iterated} and AM; and for OP, we compare with Compass~\citep{kobeaga2018efficient} and AM.

\textbf{Metrics.} We evaluate average performance and total inference time cost on $10k$ test instances. For neural solvers, we report results under three decoding strategies: greedy, multi-start, and data augmentation, to reflect trade-offs between quality and latency. The multi-start decoding strategy generates multiple trajectories on the same COP instance, improving performance compared to greedy decoding but incurring additional inference overhead. The data augmentation decoding strategy enhances COP instances by generating multiple symmetric variants, further boosting performance at the cost of even higher inference overhead. To highlight differences in practical deployment suitability, traditional solvers run on CPU, neural methods on GPU. 

For more details on experimental settings, please refer to Appendix~\ref{app:detailed_exp_settings}.

\subsection{Experimental Results}
\subsection{TSP and CVRP}

\begin{table}[t]
  \centering
  
  \caption{\textbf{Performance on TSP and CVRP.} Bold represents the best performances in each task. `-' indicates that the solver does not support the problem. `s' indicates multi-start sampling, `$8\times$' indicates data augmentation for 8 times. The number of multi-start initial points set equal to the number of nodes in the problem instance. We use LHK3's performance on CVRP from ~\citep{sun2024learning}. }
  \begin{adjustbox}{width=\linewidth}
  \small
  \begin{tabular}{llcccccc cccccc}
    \specialrule{0.8pt}{0pt}{0pt}
    & \textbf{Method} 
    & \multicolumn{3}{c}{TSP50} 
    & \multicolumn{3}{c}{TSP100} 
    & \multicolumn{3}{c}{CVRP50} 
    & \multicolumn{3}{c}{CVRP100} \\
    \cmidrule(lr){3-5} \cmidrule(lr){6-8} \cmidrule(lr){9-11} \cmidrule(lr){12-14}
    & & Cost $\downarrow$ & Gap $\downarrow$ & Time $\downarrow$ 
  & Cost $\downarrow$ & Gap $\downarrow$ & Time $\downarrow$ 
  & Cost $\downarrow$ & Gap $\downarrow$ & Time $\downarrow$ 
  & Cost $\downarrow$ & Gap $\downarrow$ & Time $\downarrow$ \\
    \specialrule{0.8pt}{0pt}{0pt}

    & Concorde       & 5.690 & --     & 13.3m  & 7.761 & --     & 40.8m &      & --     &      &      & --     &  \\
    & HGS            &     & --     &     &     & --     &     & 10.366 & --     & 2.1h & 15.586 & --     & 3.6h \\
    & LKH3           & 5.690 & 0.00\% & 2.0m  & 7.761 & 0.00\% & 18.9m & 10.367 & 0.01\% & 1h    & 15.667 & 0.52\% & 2h \\
    \midrule

    & AM (greedy. )            & 5.794 & 1.82\% & 2s    & 8.123 & 4.67\% & 5s    & 10.949 & 5.63\% & 2s    & 16.580 & 6.38\% & 5s \\
    & EAM-AM (greedy. )          & 5.780 & 1.57\% & 2s    & 8.071 & 4.00\% & 5s    & 10.893 & 5.08\% & 2s    & 16.555 & 6.22\% & 5s \\
    \midrule

    & POMO (s.)                 & 5.702 & 0.21\% & 9s    & 7.800 & 0.50\% & 19s   & 10.555 & 1.82\% & 12s   & 15.894 & 1.98\% & 28s \\
    & EAM-POMO (s.)             & 5.702 & 0.20\% & 10s   & 7.796 & 0.45\% & 19s   & 10.500 & 1.29\% & 13s   & 15.859 & 1.75\% & 29s \\
    & SymNCO (s. )      & 5.698 & 0.14\% & 10s   & 7.793 & 0.42\% & 19s   & 10.495 & 1.24\% & 13s   & 15.839 & 1.63\% & 29s \\
    & EAM-SymNCO (s. )  & 5.698 & 0.14\% & 10s   & 7.777 & 0.21\% & 19s   & 10.481 & 1.11\% & 13s   & 15.826 & 1.54\% & 29s \\
    \midrule
    & POMO (s. $8\times$)        & 5.697 & 0.11\% & 15s   & 7.775 & 0.18\% & 64s   & 10.463 & 0.94\% & 47s   & 15.760 & 1.12\% & 306s \\
    & EAM-POMO (s. $8\times$)    & 5.693 & 0.05\% & 15s   & \textbf{7.772} & \textbf{0.14\%} & 63s   & \textbf{10.422} & \textbf{0.54\%} & 47s   & \textbf{15.733} & \textbf{0.95\%} & 306s \\
    & SymNCO (s. $8\times$)      & 5.698 & 0.13\% & 15s   & 7.775 & 0.18\% & 63s   & 10.438 & 0.69\% & 48s   & 15.818 & 1.49\% & 306s \\
    & EAM-SymNCO (s. $8\times$)  & \textbf{5.692} & \textbf{0.04\%} & 16s   & 7.774    & 0.17\% & 64s   & 10.427 & 0.59\% & 48s   & 15.806 & 1.42\% & 306s \\
    \specialrule{1pt}{0pt}{0pt}
    \label{tab:tsp_cvrp_results}
  \end{tabular}
  \end{adjustbox}
\end{table}

Table~\ref{tab:tsp_cvrp_results} shows that EAM consistently enhances solution quality across models, instance sizes, and decoding strategies. On CVRP50, EAM-POMO reduces the optimality gap from $0.94\%$ to $0.54\%$, while on TSP100, EAM-POMO achieves a $0.14\%$ gap compared to $0.50\%$ from the original POMO. EAM-SymNCO further achieves the best overall performance on TSP50 with a $0.04\%$ gap. These improvements are obtained without any additional inference time. Moreover, as shown in Figure~\ref{fig:tsp100_cvrp100}, EAM significantly accelerates convergence during training on most tasks, indicating its effectiveness not only in final performance but also in training efficiency.

\begin{figure}[H]
  \centering

  \begin{subfigure}[b]{0.32\linewidth}
    \includegraphics[width=\linewidth]{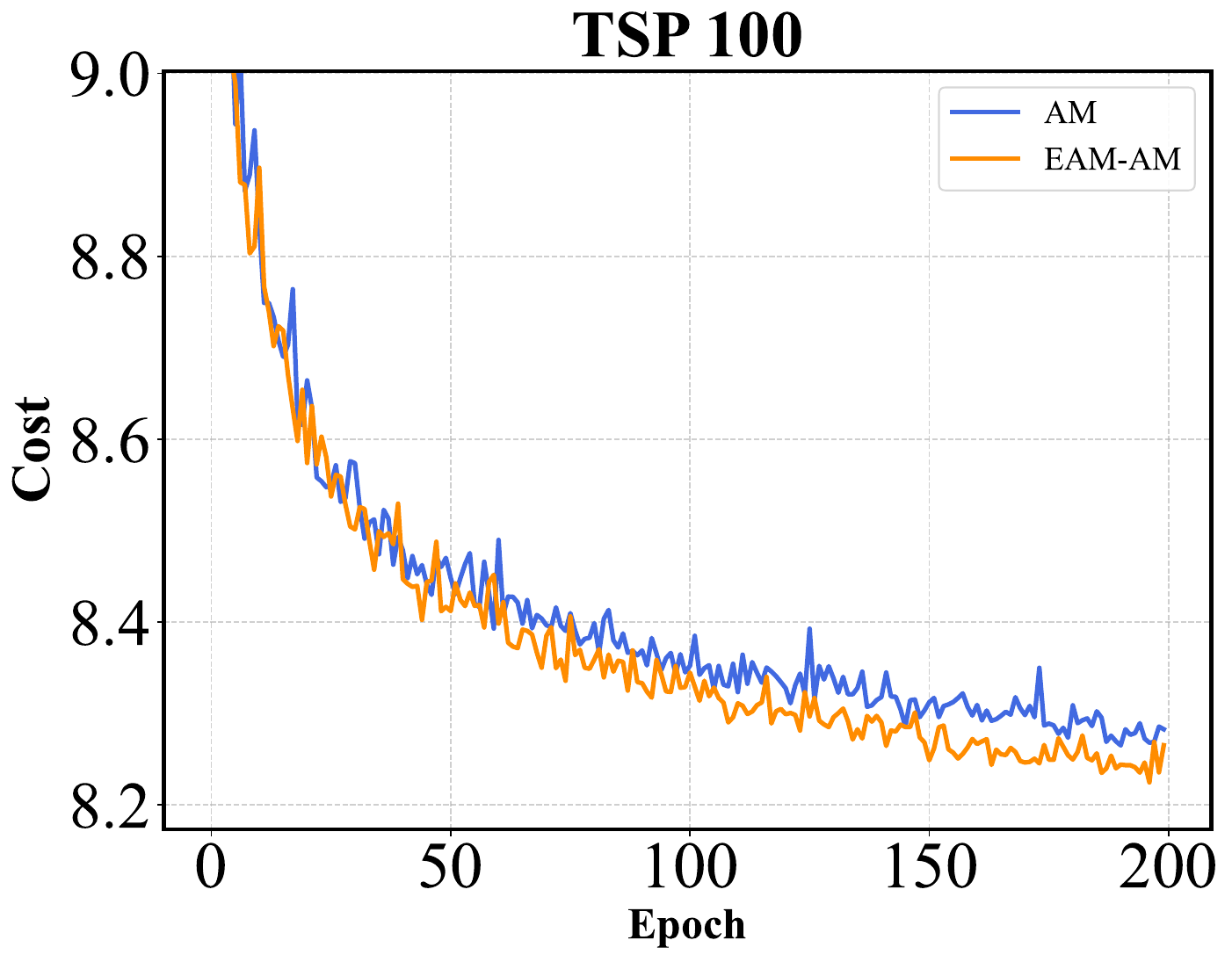}
  \end{subfigure}
  \hfill
  \begin{subfigure}[b]{0.32\linewidth}
    \includegraphics[width=\linewidth]{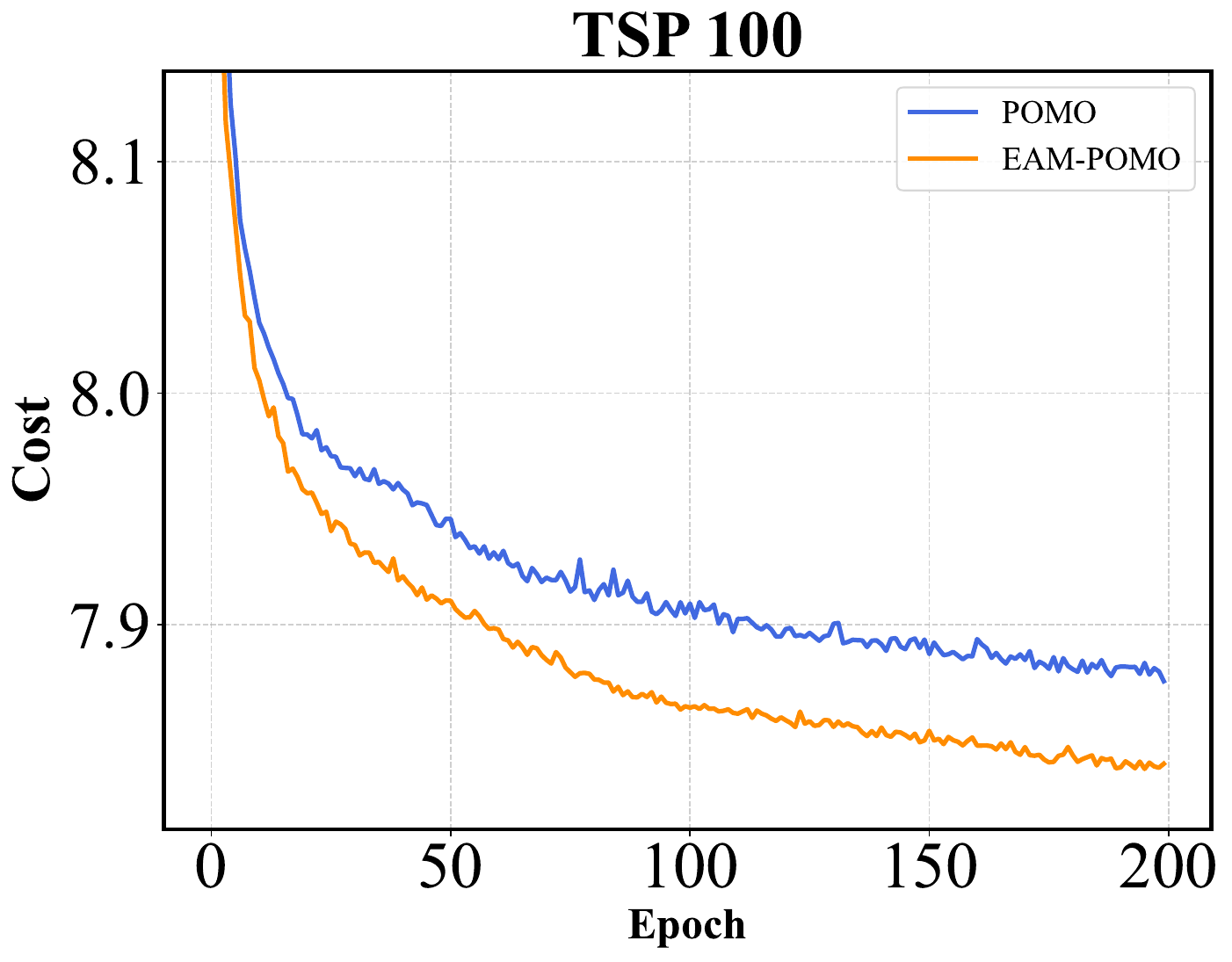}
  \end{subfigure}
  \hfill
  \begin{subfigure}[b]{0.32\linewidth}
    \includegraphics[width=\linewidth]{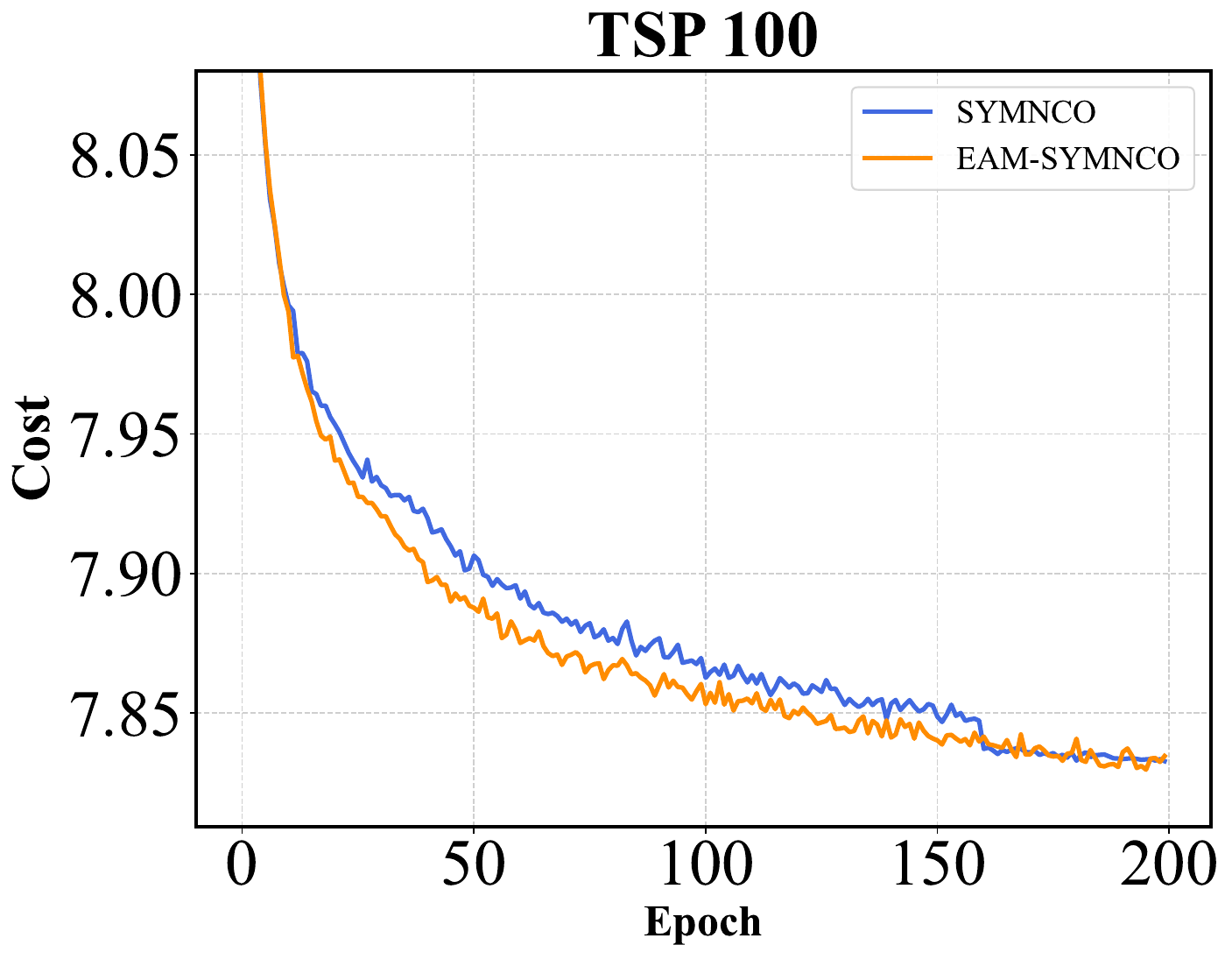}
  \end{subfigure}

  \begin{subfigure}[b]{0.32\linewidth}
    \includegraphics[width=\linewidth]{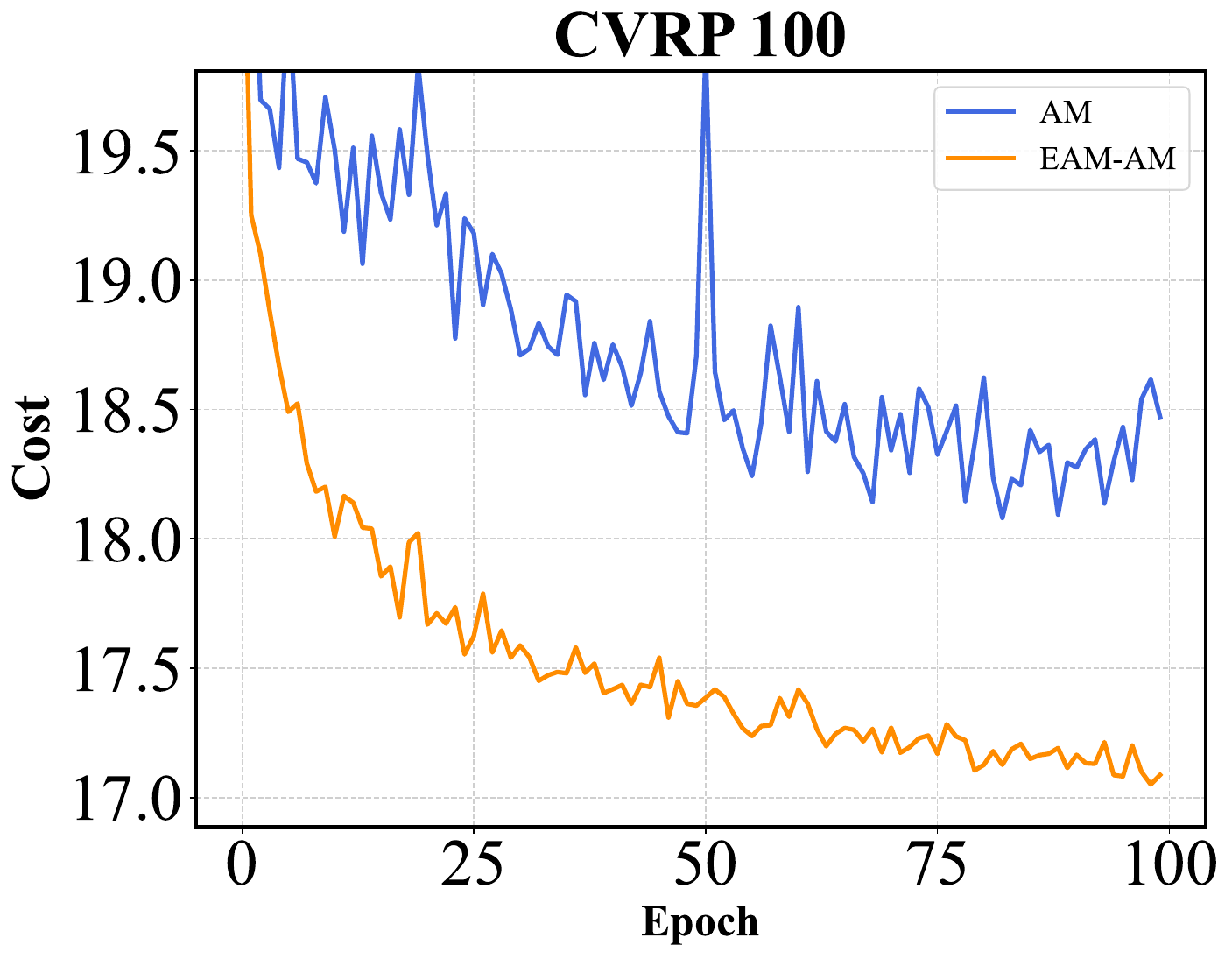}
  \end{subfigure}
  \hfill
  \begin{subfigure}[b]{0.32\linewidth}
    \includegraphics[width=\linewidth]{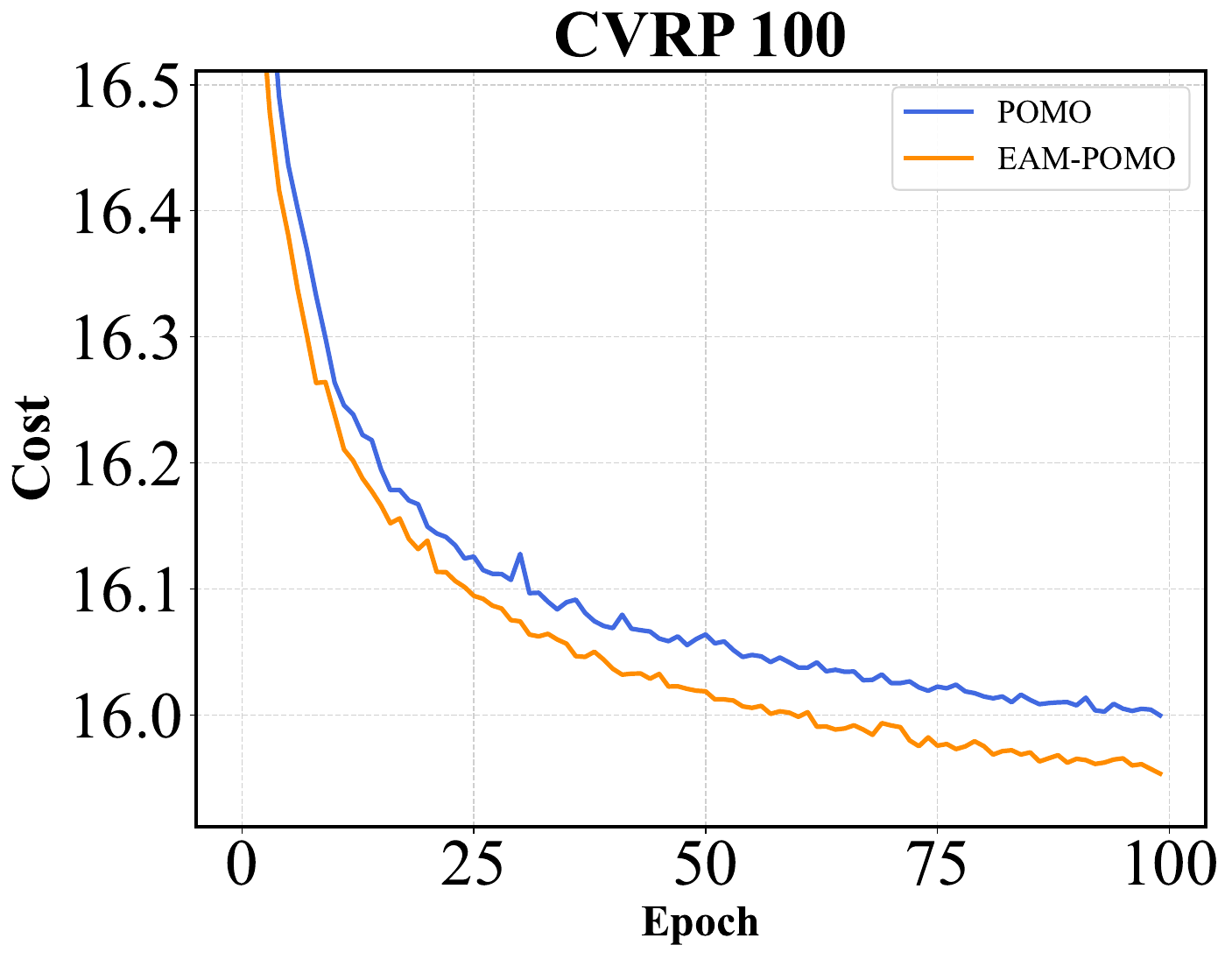}
  \end{subfigure}
  \hfill
  \begin{subfigure}[b]{0.32\linewidth}
    \includegraphics[width=\linewidth]{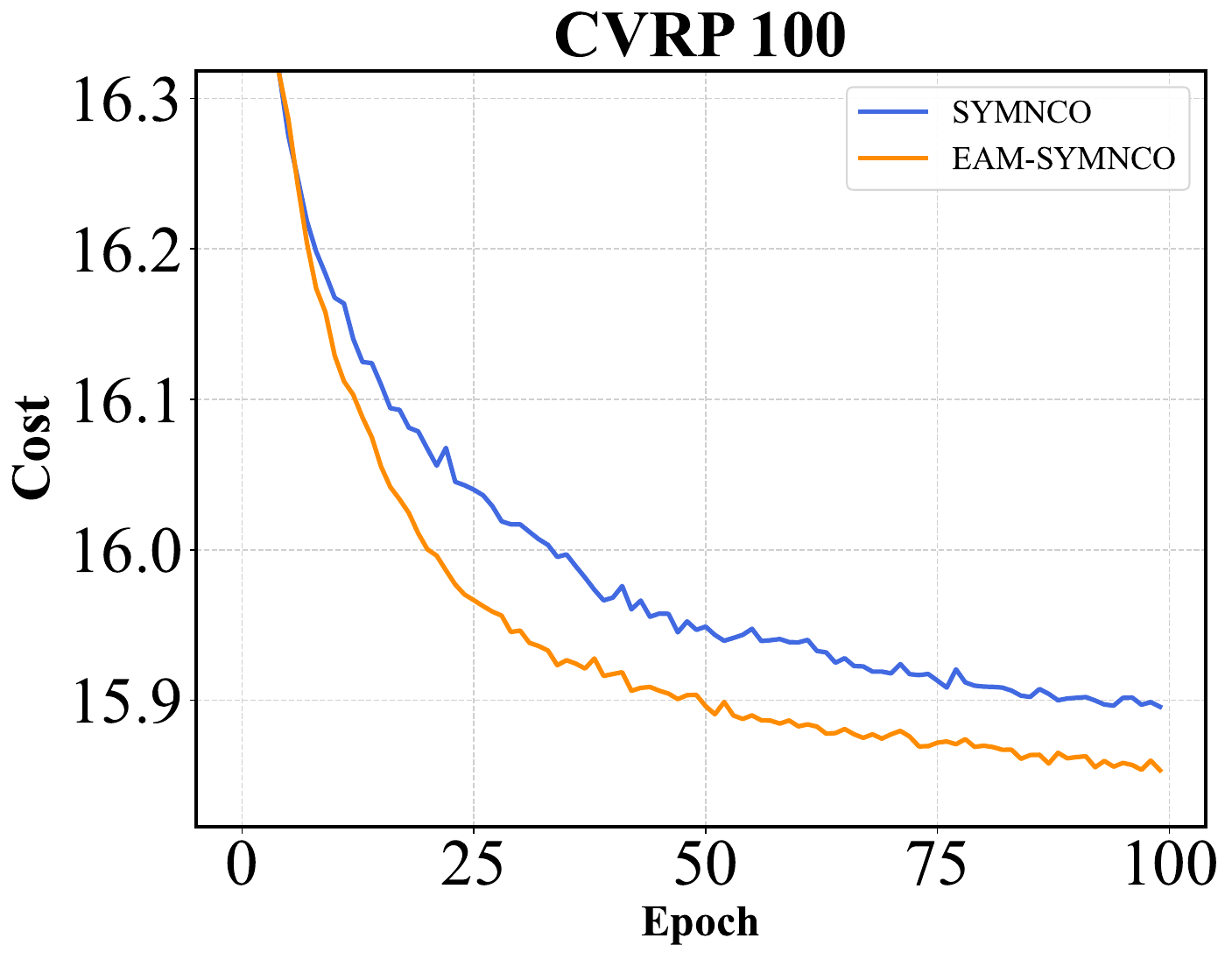}
  \end{subfigure}

  \caption{\textbf{Training curves of AM, POMO, and Sym-NCO with and without EAM on TSP-100 and CVRP-100.} EAM consistently accelerates convergence across different backbones and problem settings, highlighting its ability to improve training efficiency in addition to final solution quality.}
    \label{fig:tsp100_cvrp100}
\end{figure}

\subsubsection{PCTSP and OP}
As shown in Table~\ref{tab:pctsp_op_results}, EAM also exhibits consistent performance gains on PCTSP and OP. 
For example, on OP, EAM-AM improves the objective from 31.54 to 31.85, narrowing the gap from $4.97\%$ to $4.03\%$. These improvements highlight EAM’s ability to enhance the exploratory capacity of the policy in scenarios characterized by partial routing and complex objective functions, further validating its generality and adaptability as a policy enhencement module across diverse COPs.

\begin{table}[htbp]
  \centering
  \caption{\textbf{Performance on PCTSP and OP}. Notations are the same with Table~\ref{tab:tsp_cvrp_results}. We use ILS and Compass's performance from ~\citep{kim2022sym}.}
  \begin{adjustbox}{width=0.6\linewidth}
  \small
  \begin{tabular}{lccccccc}
    \specialrule{1.2pt}{0pt}{0pt}
    \multirow{2}{*}{\textbf{Method}} 
      & \multicolumn{3}{c}{\textbf{PCTSP100}} 
      & \multicolumn{3}{c}{\textbf{OP}} \\
    \cmidrule(lr){2-4} \cmidrule(lr){5-7}
    & Cost $\downarrow$ & Gap $\downarrow$ & Time $\downarrow$ 
    & Obj $\uparrow$ & Gap $\downarrow$ & Time $\downarrow$ \\
    \specialrule{1.2pt}{0pt}{0pt}

    ILS C++   & 5.98  & --     & 12h   &     & --     &  \\
    Compass   &     & --     &     & 33.19 & --     & 15m \\
    \midrule
    AM (greedy.)       & 6.21  & 3.92\% & 2s    & 31.54 & 4.97\% & 2s \\
    EAM-AM (greedy.)   & \textbf{6.20}  & \textbf{3.61\%} & 2s    
              & \textbf{31.85} & \textbf{4.03\%} & 2s \\
    \specialrule{1.2pt}{0pt}{0pt}
  \end{tabular}
  \end{adjustbox}
  \label{tab:pctsp_op_results}
\end{table}

\subsubsection{Ablation Study}
We conduct a comprehensive ablation study to evaluate the contribution of key designs in EAM. The experiments are organized along two dimensions: 1) evolutionary hyperparameter configuration and 2) annealed evolution frequencies. For each aspect, we systematically alter specific design components to evaluate their influence on solution quality and training dynamics.

\begin{figure}[htbp]
  \begin{minipage}[b]{0.32\linewidth}
    \centering
    \includegraphics[width=\linewidth]{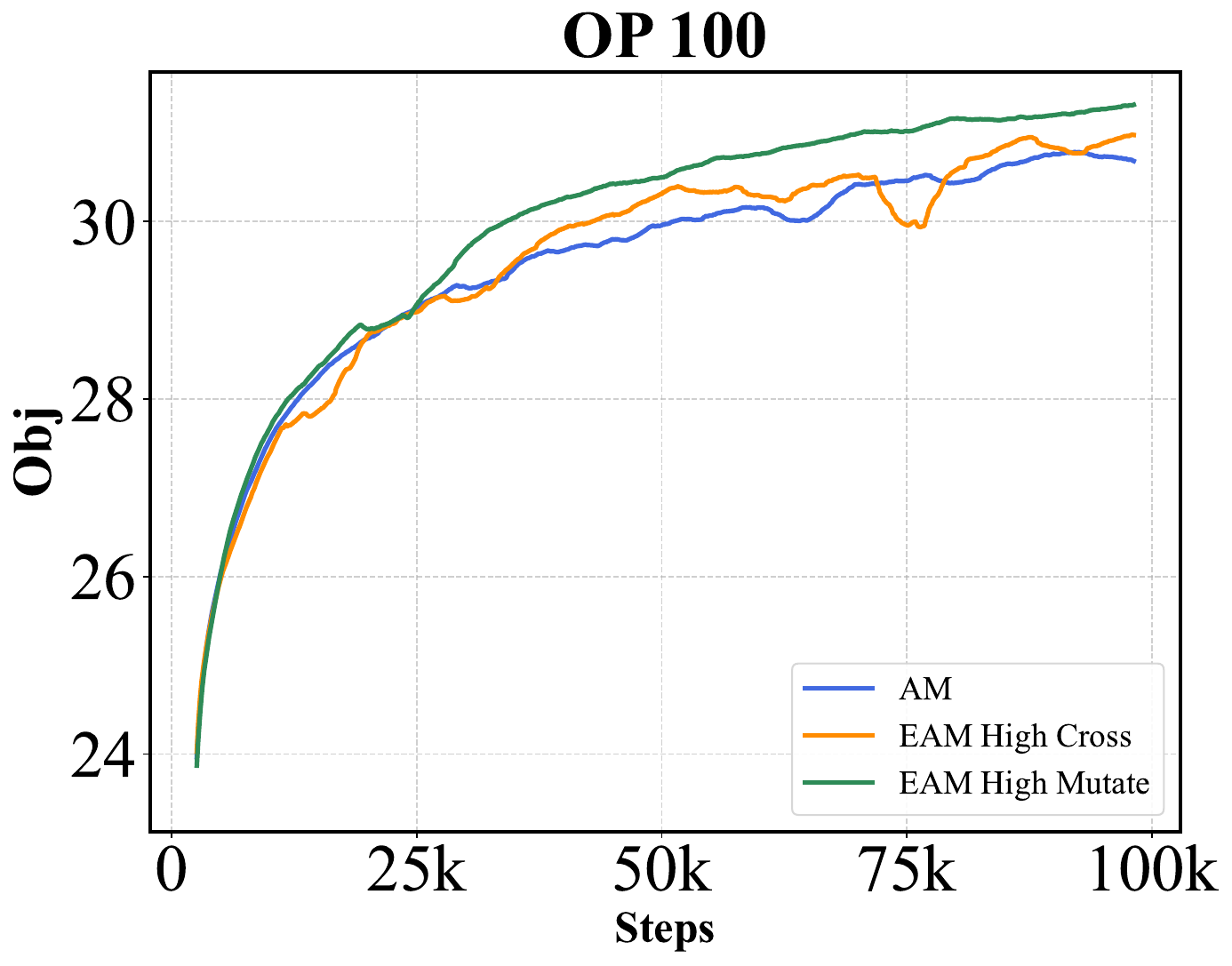}
    \caption{Training curves of various hyperparameters.}
    \label{fig:abli_param}
  \end{minipage}
  \hfill
  \begin{minipage}[b]{0.32\linewidth}
    \centering
    \includegraphics[width=\linewidth]{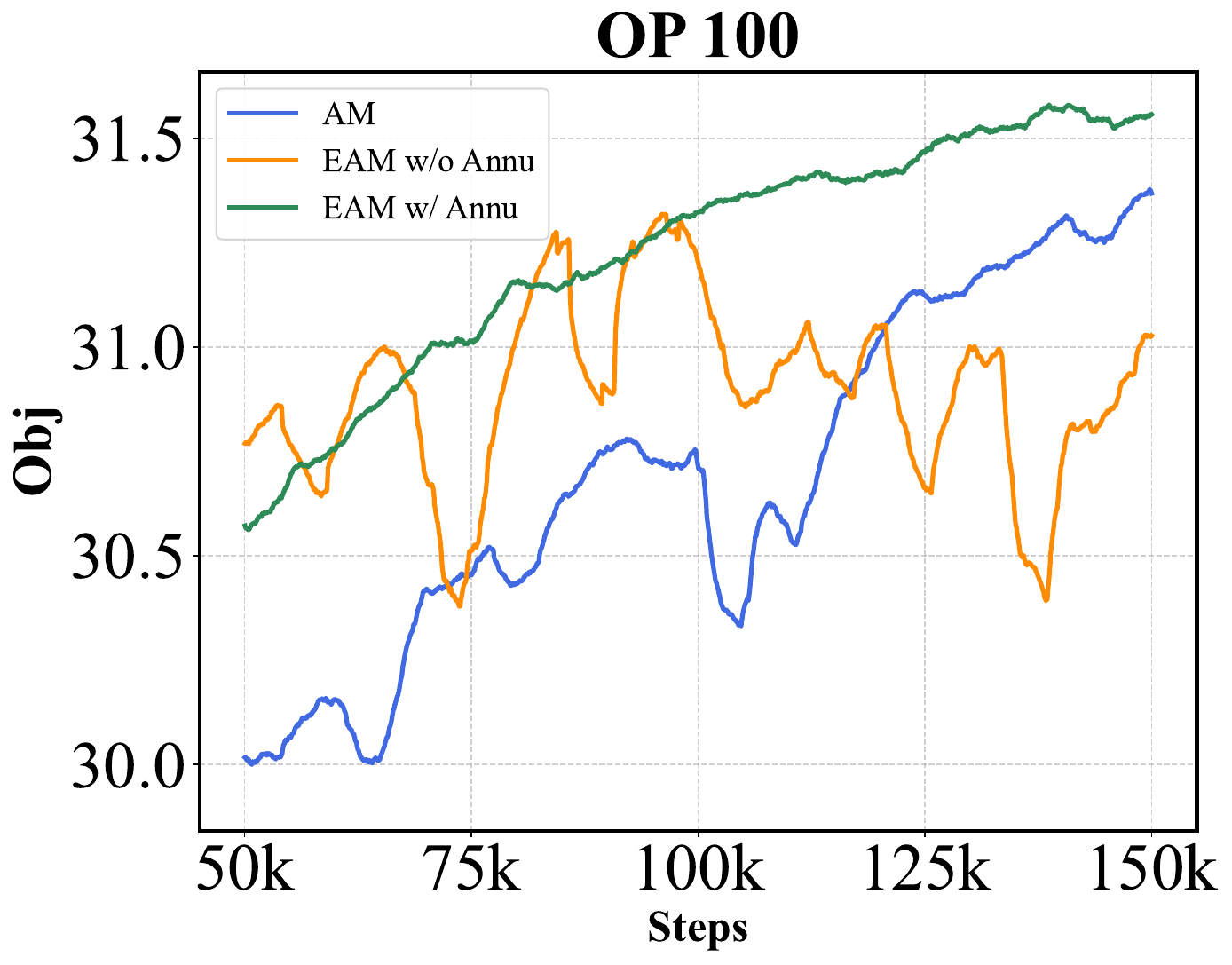}
    \caption{Training curves of whether to use annealing.}
    \label{fig:abli_annu}
  \end{minipage}
  \hfill
  \begin{minipage}[b]{0.32\linewidth}
    \centering
    \includegraphics[width=\linewidth]{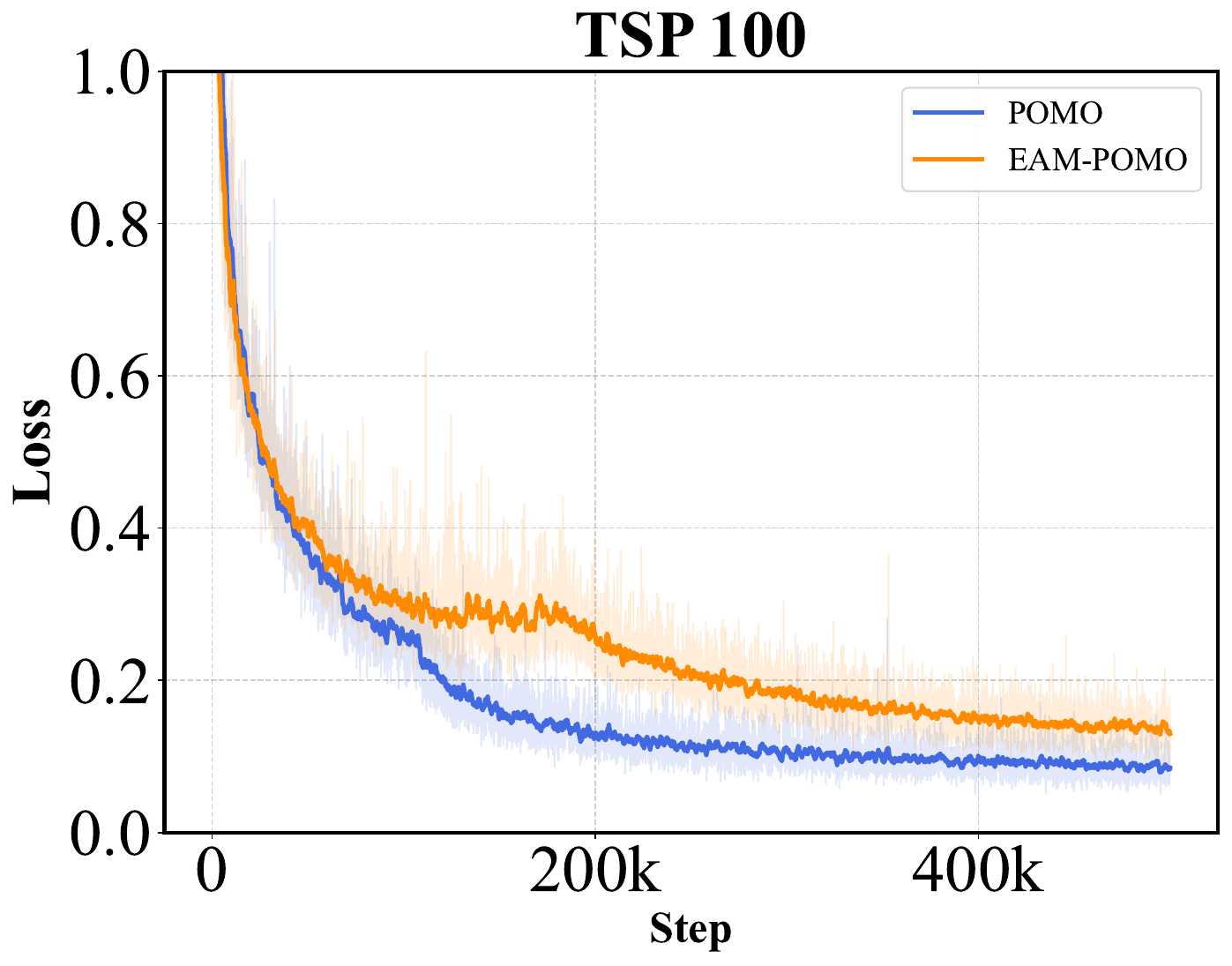}
    \caption{Training curves of absolute loss on TSP 100.}
    \label{fig:tsp100_loss}
  \end{minipage}
\end{figure}

\paragraph{Evolutionary Hyperparameter Configuration.}
We examine the impact of different crossover and mutation rate configurations in EAM on OP (Fig.~\ref{fig:abli_param}). Specifically, we compare two settings: (1) high mutation rate with low crossover rate, and (2) low mutation rate with high crossover rate. Empirical results show that the first setting leads to faster convergence and superior final solution quality. This configuration aligns better with the OP objective, where the absolute node order is largely irrelevant and aggressive node replacement helps uncover high-reward substructures more effectively.

Additionally, this configuration results in a smaller KL divergence upper bound (see Appendix~\ref{app:implement_details}), implying reduced bias when incorporating evolved trajectories into policy updates. This observation supports the theoretical rationale for modeling KL divergence and highlights the importance of task-aware perturbation strength in maintaining training stability.

\paragraph{Annealed Evolution Frequency.}

We compare constant and annealed evolution frequencies (Fig.~\ref{fig:abli_annu}). Disabling annealing leads to slower improvement and significant instability, indicating that the utility of evolutionary perturbations gradually shifts toward negative effects as training progresses. These results validate the annealing schedule and align with the theoretical insight in Section~\ref{sec:theorem}.

\section{Discussion}
\subsection{Solution Visualization}

\begin{figure}[htbp]
    \centering
    \begin{subfigure}[t]{0.48\linewidth}
        \centering
        \includegraphics[width=\linewidth]{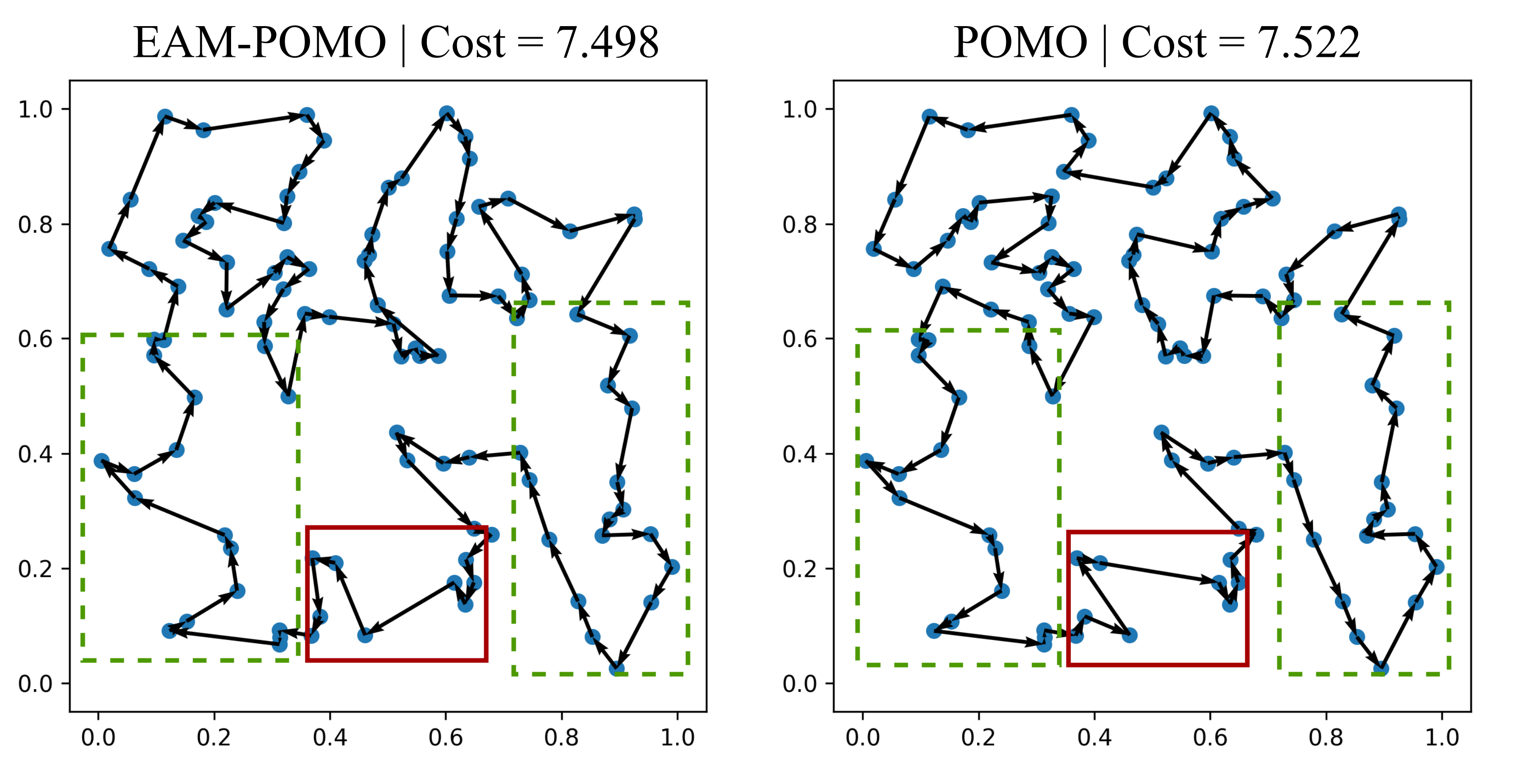}
        \caption{}  
        \label{fig:tsp100_visual_1}
    \end{subfigure}
    \hfill
    \begin{subfigure}[t]{0.48\linewidth}
        \centering
        \includegraphics[width=\linewidth]{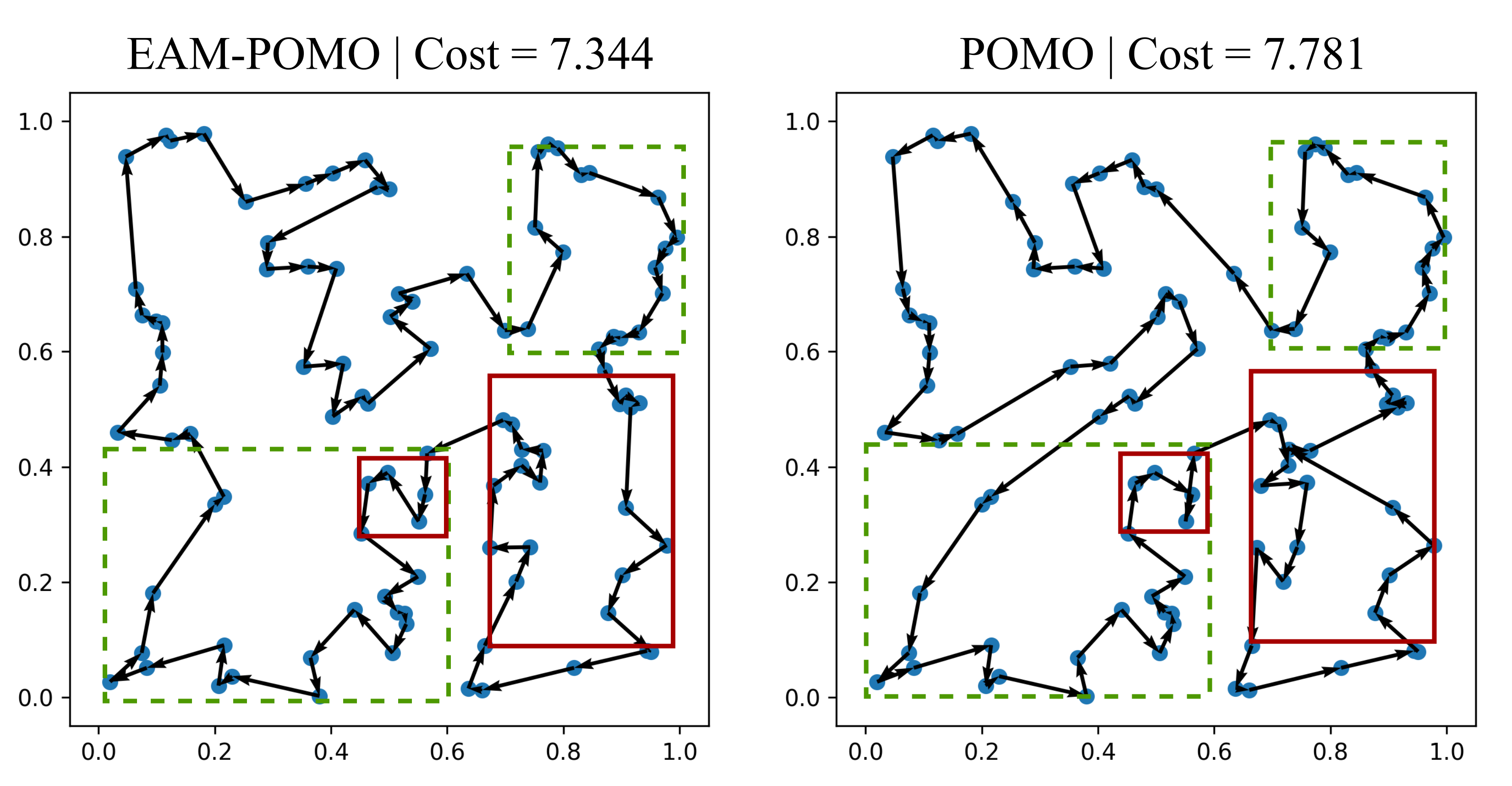}
        \caption{}  
        \label{fig:tsp100_visual_2}
    \end{subfigure}
    \caption{\textbf{Solutions generated by POMO and EAM-POMO on representative TSP-100 instances.} Both models produce globally similar tours, but EAM-POMO exhibits finer local refinements—especially in dense regions—demonstrating the structural optimization effect of EAM.}
    \label{fig:tsp100_visual}
\end{figure}

To examine how EAM guides the policy network toward generating higher-quality solutions during inference, we select representative instances on TSP-100 to compare the solution trajectories produced by POMO and EAM-POMO (Fig.~\ref{fig:tsp100_visual}). Both models exhibit highly similar structures, indicating that EAM does not fundamentally alter the overall behavior of the policy.

However, due to the encoder-decoder architecture commonly adopted in L2C methods, node embeddings are typically computed in a static manner before trajectory construction.As a result, spatially adjacent nodes often receive similar embeddings, making it difficult for the policy network to effectively distinguish between them in high-density regions. This limitation can lead to suboptimal node selection and inaccurate ordering in  solution. In contrast, genetic algorithms iteratively refine solutions through crossover and mutation, enabling fine-grained adjustments to the tour structure and compensating for the policy network’s limited capacity to capture subtle local variations. The red-boxed region in Figure~\ref{fig:tsp100_visual} illustrates that the observed performance difference is mainly driven by the discrepancy in inference quality between EAM-POMO and POMO.

\subsection{Training Stability Analysis}

To empirically validate the KL-based theoretical analysis proposed in this work, we further analyze the training dynamics on TSP-100 by tracking the absolute value of the policy gradient loss with and without EAM (Fig.~\ref{fig:tsp100_loss}). Under REINFORCE with a normalized advantage, the loss magnitude serves as a proxy for the log-probability assigned by the policy to sampled trajectories: lower log-probability leads to higher absolute value of the loss.

Empirical results show that EAM consistently induces higher loss values, indicating that evolved samples lie in low-probability regions of the policy distribution. At the beginning of training, however, the loss values under EAM and the baseline are nearly identical. This empirical observation consistent with the theoretical prediction in Theorem~\ref{thm:main}, which states that when the trajectories approximately follow a uniform distribution, $\frac{\max p(r_{\text{cross}}|f_{\text{cross}})}{\min p(r_{\text{cross}}|f_{\text{cross}})} \approx 1$ and $\frac{\max p(r_{\text{mutate}}|f_{\text{mutate}})}{\min p(r_{\text{mutate}}|f_{\text{mutate}})} \approx 1$, indicating $D_{\text{KL}} \approx 0$. As training progresses, the loss increase under EAM remains bounded, indicating that the policy is progressively absorbing the structural information encoded in the evolutionary samples. This observation aligns with Theorem~\ref{thm:main}, confirming that EAM introduces controlled distributional bias and supports stable policy updates.

Together with the earlier structural visualization, these results demonstrate the consistency between EAM’s theoretical foundations and its empirical behavior, reinforcing its effectiveness as a reliable enhancement for NCO.

\section{Related Works}

\subsection{Learning-to-Construct Methods for COPs}

L2C methods solving COPs by autoregressively constructing solutions via sequence models. Early works include the Pointer Network~\citep{vinyals2015pointer}, REINFORCE-based training~\citep{bello2016neural}. \citet{kool2018attention} proposed the Attention Model (AM), which leveraged the Transformer architecture to enhance the expressiveness of policy networks, and has since become a widely adopted baseline. Building upon AM, \citet{kwon2020pomo} exploited the inherent symmetry of problems such as TSP and CVRP by employing a multi-start sampling strategy, resulting in further performance gains. \citet{kim2022sym} extends symmetry principle by utilizing data augmentation and modifying the loss formulation to harness the problem’s inherent invariance more effectively. Subsequent research extending L2C methods can be broadly categorized into two lines of work. The first line of work emphasizes increasing the structural diversity and optimality of solutions~\citep{grinsztajn2023winner, hottung2021efficient, sun2024learning}, while the other line of work centers on the explicit modeling of structural priors and problem symmetries to speed up convergence~\citep{wang2024leader, son2024equity}. 

Although the aforementioned methods have achieved substantial performance gains in combinatorial optimization, they typically operate by conditioning on incrementally constructed partial solutions and lack explicit global search mechanisms, rendering them susceptible to local optima. In contrast, our EAM introduces an evolutionary augmentation mechanism that enables global trajectory exploration beyond the capabilities of standard autoregressive models.

\subsection{Hybrid RL and Heuristic Methods for COPs}

The introduction of the ERL framework~\citep{khadka2018evolution}, marked a turning point that catalyzed the development of a broad class of hybrid optimization methods combining evolutionary algorithms with RL~\citep{sigaud2023combining}. In the context of combinatorial optimization, the integration of heuristic algorithms and RL has manifested in several distinct algorithmic paradigms. One line of work leverages RL agents to inject solution diversity into the evolutionary population, facilitating more effective exploration across structurally constrained combinatorial landscapes~\citep{radaideh2021rule}. Another line of work focuses on learning adaptive mutation and crossover operators using RL, replacing manually designed heuristics with learned local search behaviors~\citep{hu2022constrained, liu2023neurocrossover}. A third line of work incorporates heuristic algorithms during inference to refine initial solutions sampled from policy networks, thereby enhancing the quality of final solutions at inference~\citep{greenberg2025accelerating}.

Among the various paradigms that combine heuristic algorithms with RL for solving COPs, RLHO~\cite{cai2019reinforcement} bears the closest architectural resemblance to our approach. In RLHO, a policy network is first employed to sample feasible solutions, which are then refined using simulated annealing. The improvement in solution quality achieved by the heuristic algorithm is treated as a reward and fed back to the policy network for training. Although this method also establishes a coupled training framework between RL and heuristics, its primary focus lies in training the policy to generate solutions that are amenable to heuristic improvement, thereby adopting a \textit{heuristic-centered} paradigm with RL playing a supporting role.

In contrast, the EAM proposed in this work adopts a \textit{RL–centered} framework with heuristic optimization serving as an auxiliary enhancement. Specifically, we apply a genetic algorithm to locally evolve policy-generated solution trajectories, thereby improving both their structural diversity and solution quality. These evolved trajectories are then directly integrated into the policy gradient optimization process. This approach maintains the high inference efficiency of RL while substantially enhancing exploration capability and accelerating training convergence.

\section{Conclusion and Future Work}
\label{sec:conclusion}
We propose EAM, a general framework that unifies the strengths of RL and GAs. EAM operates by injecting evolved samples generated by GA into the RL policy learning loop, thereby encouraging exploration and mitigating the limitations of sparse rewards and suboptimal trajectory construction. Theoretical analyses based on KL divergence provide guarantees on training stability, while empirical studies across diverse combinatorial optimization tasks—including TSP, CVRP, PCTSP, and OP—demonstrate consistent gains in both solution quality and training efficiency. Our results confirm EAM’s effectiveness as a lightweight, model-agnostic, and plug-and-play enhancement that can be readily integrated into existing DRL-NCO pipelines.

\textbf{Limitations and Future Work.}
While EAM demonstrates consistent improvements across multiple COPs, there are several limitations worth noting. First, our current evaluation focuses primarily on representative routing problems. Although these benchmarks are widely used and sufficiently diverse in structure, they do not fully capture the breadth of real-world COPs (e.g., Job Shop Scheduling, Knapsack, and Graph Coloring Problems). Extending EAM to a broader class of COPs with different structural constraints remains an important and worthwhile direction for future research. Furthermore, evaluating EAM under stochastic problem settings—such as constraints are subject to uncertainty~\citep{berto2024routefinder}—could provide deeper insights into its robustness and generalization capabilities.

Second, the problem instances studied in this paper are of moderate scale (typically up to 100 nodes). While this is consistent with prior DRL-based approaches, scaling to significantly larger instances remains an open challenge. We note, however, that many divide-and-conquer schemes~\citep{zheng2024udc, pan2023h} have been successfully applied to large-scale COPs by breaking them into smaller subproblems. EAM can be naturally embedded within such decomposition-based solvers to improve the quality of subproblem solutions and thereby enhance overall performance. We consider this a particularly compelling avenue for future exploration.

\bibliographystyle{plainnat}
\bibliography{references}

\begin{thebibliography}{50}
\providecommand{\natexlab}[1]{#1}
\providecommand{\url}[1]{\texttt{#1}}
\expandafter\ifx\csname urlstyle\endcsname\relax
  \providecommand{\doi}[1]{doi: #1}\else
  \providecommand{\doi}{doi: \begingroup \urlstyle{rm}\Url}\fi

\bibitem[sto(2015)]{stone2015information}
Information theory: a tutorial introduction.
\newblock 2015.

\bibitem[Ahn et~al.(2020)Ahn, Seo, and Shin]{ahn2020learning}
Sungsoo Ahn, Younggyo Seo, and Jinwoo Shin.
\newblock Learning what to defer for maximum independent sets.
\newblock In \emph{International conference on machine learning}, pages
  134--144. PMLR, 2020.

\bibitem[Alkafaween et~al.(2024)Alkafaween, Hassanat, Essa, and
  Elmougy]{alkafaween2024efficiency}
Esra’a Alkafaween, Ahmad Hassanat, Ehab Essa, and Samir Elmougy.
\newblock An efficiency boost for genetic algorithms: Initializing the ga with
  the iterative approximate method for optimizing the traveling salesman
  problem—experimental insights.
\newblock \emph{Applied Sciences}, 14\penalty0 (8):\penalty0 3151, 2024.

\bibitem[Applegate(2006)]{applegate2006traveling}
David~L Applegate.
\newblock \emph{The traveling salesman problem: a computational study},
  volume~17.
\newblock Princeton university press, 2006.

\bibitem[Balas(1989)]{balas1989prize}
Egon Balas.
\newblock The prize collecting traveling salesman problem.
\newblock \emph{Networks}, 19\penalty0 (6):\penalty0 621--636, 1989.

\bibitem[Bello et~al.(2016)Bello, Pham, Le, Norouzi, and
  Bengio]{bello2016neural}
Irwan Bello, Hieu Pham, Quoc~V Le, Mohammad Norouzi, and Samy Bengio.
\newblock Neural combinatorial optimization with reinforcement learning.
\newblock \emph{arXiv preprint arXiv:1611.09940}, 2016.

\bibitem[Berto et~al.(2024)Berto, Hua, Zepeda, Hottung, Wouda, Lan, Park,
  Tierney, and Park]{berto2024routefinder}
Federico Berto, Chuanbo Hua, Nayeli~Gast Zepeda, Andr{\'e} Hottung, Niels
  Wouda, Leon Lan, Junyoung Park, Kevin Tierney, and Jinkyoo Park.
\newblock Routefinder: Towards foundation models for vehicle routing problems.
\newblock \emph{arXiv preprint arXiv:2406.15007}, 2024.

\bibitem[Cai et~al.(2019)Cai, Hang, Mirhoseini, Tucker, Wang, and
  Wei]{cai2019reinforcement}
Qingpeng Cai, Will Hang, Azalia Mirhoseini, George Tucker, Jingtao Wang, and
  Wei Wei.
\newblock Reinforcement learning driven heuristic optimization.
\newblock \emph{arXiv preprint arXiv:1906.06639}, 2019.

\bibitem[Chen and Tian(2019)]{chen2019learning}
Xinyun Chen and Yuandong Tian.
\newblock Learning to perform local rewriting for combinatorial optimization.
\newblock \emph{Advances in neural information processing systems}, 32, 2019.

\bibitem[Croes(1958)]{croes1958method}
Georges~A Croes.
\newblock A method for solving traveling-salesman problems.
\newblock \emph{Operations research}, 6\penalty0 (6):\penalty0 791--812, 1958.

\bibitem[d~O~Costa et~al.(2020)d~O~Costa, Rhuggenaath, Zhang, and
  Akcay]{d2020learning}
Paulo~R d~O~Costa, Jason Rhuggenaath, Yingqian Zhang, and Alp Akcay.
\newblock Learning 2-opt heuristics for the traveling salesman problem via deep
  reinforcement learning.
\newblock In \emph{Asian conference on machine learning}, pages 465--480. PMLR,
  2020.

\bibitem[Davis(1991)]{davis1991handbook}
L~Davis.
\newblock Handbook of genetic algorithms.
\newblock 1991.

\bibitem[De~Jong(1975)]{de1975analysis}
Kenneth~Alan De~Jong.
\newblock \emph{An analysis of the behavior of a class of genetic adaptive
  systems.}
\newblock University of Michigan, 1975.

\bibitem[Deliparaschos and Tzafestas(2018)]{deliparaschos2018design}
Kyriakos~M Deliparaschos and Spyros~G Tzafestas.
\newblock Design paradigms of intelligent control systems on a chip.
\newblock \emph{arXiv preprint arXiv:1811.08426}, 2018.

\bibitem[Fekri and Barazandeh(2019)]{fekri2019designing}
Masoud Fekri and Babak Barazandeh.
\newblock Designing an optimal portfolio for iran's stock market with genetic
  algorithm using neural network prediction of risk and return stocks.
\newblock \emph{arXiv preprint arXiv:1903.06632}, 2019.

\bibitem[Goldberg and Lingle~Jr(1985)]{goldberg1985alleleslociand}
David~E Goldberg and Robert Lingle~Jr.
\newblock Alleleslociand the traveling salesman problem.
\newblock In \emph{Proceedings of the 1st International Conference on Genetic
  Algorithms}, pages 154--159, 1985.

\bibitem[Golden et~al.(1987)Golden, Levy, and Vohra]{golden1987orienteering}
Bruce~L Golden, Larry Levy, and Rakesh Vohra.
\newblock The orienteering problem.
\newblock \emph{Naval Research Logistics (NRL)}, 34\penalty0 (3):\penalty0
  307--318, 1987.

\bibitem[Greenberg et~al.(2025)Greenberg, Sielski, Linsenmaier, Gandham,
  Mannor, Fender, Chechik, and Meirom]{greenberg2025accelerating}
Ido Greenberg, Piotr Sielski, Hugo Linsenmaier, Rajesh Gandham, Shie Mannor,
  Alex Fender, Gal Chechik, and Eli Meirom.
\newblock Accelerating vehicle routing via ai-initialized genetic algorithms.
\newblock \emph{arXiv preprint arXiv:2504.06126}, 2025.

\bibitem[Grinsztajn et~al.(2023)Grinsztajn, Furelos-Blanco, Surana, Bonnet, and
  Barrett]{grinsztajn2023winner}
Nathan Grinsztajn, Daniel Furelos-Blanco, Shikha Surana, Cl{\'e}ment Bonnet,
  and Tom Barrett.
\newblock Winner takes it all: Training performant rl populations for
  combinatorial optimization.
\newblock \emph{Advances in Neural Information Processing Systems},
  36:\penalty0 48485--48509, 2023.

\bibitem[Helsgaun(2017)]{helsgaun2017extension}
Keld Helsgaun.
\newblock An extension of the lin-kernighan-helsgaun tsp solver for constrained
  traveling salesman and vehicle routing problems.
\newblock \emph{Roskilde: Roskilde University}, 12:\penalty0 966--980, 2017.

\bibitem[Hottung et~al.(2021)Hottung, Kwon, and Tierney]{hottung2021efficient}
Andr{\'e} Hottung, Yeong-Dae Kwon, and Kevin Tierney.
\newblock Efficient active search for combinatorial optimization problems.
\newblock \emph{arXiv preprint arXiv:2106.05126}, 2021.

\bibitem[Hottung et~al.(2024)Hottung, Mahajan, and Tierney]{hottung2024polynet}
Andr{\'e} Hottung, Mridul Mahajan, and Kevin Tierney.
\newblock Polynet: Learning diverse solution strategies for neural
  combinatorial optimization.
\newblock \emph{arXiv preprint arXiv:2402.14048}, 2024.

\bibitem[Hu and Gong(2022)]{hu2022constrained}
Zhenzhen Hu and Wenyin Gong.
\newblock Constrained evolutionary optimization based on reinforcement learning
  using the objective function and constraints.
\newblock \emph{Knowledge-Based Systems}, 237:\penalty0 107731, 2022.

\bibitem[Khadka and Tumer(2018)]{khadka2018evolution}
Shauharda Khadka and Kagan Tumer.
\newblock Evolution-guided policy gradient in reinforcement learning.
\newblock \emph{Advances in Neural Information Processing Systems}, 31, 2018.

\bibitem[Kim et~al.(2021)Kim, Park, et~al.]{kim2021learning}
Minsu Kim, Jinkyoo Park, et~al.
\newblock Learning collaborative policies to solve np-hard routing problems.
\newblock \emph{Advances in Neural Information Processing Systems},
  34:\penalty0 10418--10430, 2021.

\bibitem[Kim et~al.(2022)Kim, Park, and Park]{kim2022sym}
Minsu Kim, Junyoung Park, and Jinkyoo Park.
\newblock Sym-nco: Leveraging symmetricity for neural combinatorial
  optimization.
\newblock \emph{Advances in Neural Information Processing Systems},
  35:\penalty0 1936--1949, 2022.

\bibitem[Kobeaga et~al.(2018)Kobeaga, Merino, and Lozano]{kobeaga2018efficient}
Gorka Kobeaga, Mar{\'\i}a Merino, and Jose~A Lozano.
\newblock An efficient evolutionary algorithm for the orienteering problem.
\newblock \emph{Computers \& Operations Research}, 90:\penalty0 42--59, 2018.

\bibitem[Kool et~al.(2018)Kool, Van~Hoof, and Welling]{kool2018attention}
Wouter Kool, Herke Van~Hoof, and Max Welling.
\newblock Attention, learn to solve routing problems!
\newblock \emph{arXiv preprint arXiv:1803.08475}, 2018.

\bibitem[Kwon et~al.(2020)Kwon, Choo, Kim, Yoon, Gwon, and Min]{kwon2020pomo}
Yeong-Dae Kwon, Jinho Choo, Byoungjip Kim, Iljoo Yoon, Youngjune Gwon, and
  Seungjai Min.
\newblock Pomo: Policy optimization with multiple optima for reinforcement
  learning.
\newblock \emph{Advances in Neural Information Processing Systems},
  33:\penalty0 21188--21198, 2020.

\bibitem[Liu et~al.(2023)Liu, Zong, Li, and Jin]{liu2023neurocrossover}
Haoqiang Liu, Zefang Zong, Yong Li, and Depeng Jin.
\newblock Neurocrossover: An intelligent genetic locus selection scheme for
  genetic algorithm using reinforcement learning.
\newblock \emph{Applied Soft Computing}, 146:\penalty0 110680, 2023.

\bibitem[Louren{\c{c}}o et~al.(2018)Louren{\c{c}}o, Martin, and
  St{\"u}tzle]{lourencco2018iterated}
Helena~Ramalhinho Louren{\c{c}}o, Olivier~C Martin, and Thomas St{\"u}tzle.
\newblock Iterated local search: Framework and applications.
\newblock In \emph{Handbook of metaheuristics}, pages 129--168. Springer, 2018.

\bibitem[Luu(2022)]{luu2022application}
Thu~Quang Luu.
\newblock Application of artificial intelligence-genetic algorithms to select
  stock portfolios in the asian markets.
\newblock \emph{International Journal of Advanced Computer Science and
  Applications}, 13\penalty0 (12), 2022.

\bibitem[Ma et~al.(2021)Ma, Li, Cao, Song, Zhang, Chen, and
  Tang]{ma2021learning}
Yining Ma, Jingwen Li, Zhiguang Cao, Wen Song, Le~Zhang, Zhenghua Chen, and
  Jing Tang.
\newblock Learning to iteratively solve routing problems with dual-aspect
  collaborative transformer.
\newblock \emph{Advances in Neural Information Processing Systems},
  34:\penalty0 11096--11107, 2021.

\bibitem[Mahmoudinazlou and Kwon(2024)]{mahmoudinazlou2024hybrid}
Sasan Mahmoudinazlou and Changhyun Kwon.
\newblock A hybrid genetic algorithm for the min--max multiple traveling
  salesman problem.
\newblock \emph{Computers \& Operations Research}, 162:\penalty0 106455, 2024.

\bibitem[Pan et~al.(2023)Pan, Jin, Ding, Feng, Zhao, Song, and Bian]{pan2023h}
Xuanhao Pan, Yan Jin, Yuandong Ding, Mingxiao Feng, Li~Zhao, Lei Song, and
  Jiang Bian.
\newblock H-tsp: Hierarchically solving the large-scale traveling salesman
  problem.
\newblock In \emph{Proceedings of the AAAI Conference on Artificial
  Intelligence}, volume~37, pages 9345--9353, 2023.

\bibitem[Ping(2022)]{ping2022application}
Wang Ping.
\newblock Application of genetic algorithm in logistics management and
  distribution.
\newblock In \emph{International Conference on Multi-modal Information
  Analytics}, pages 1041--1046. Springer, 2022.

\bibitem[Radaideh and Shirvan(2021)]{radaideh2021rule}
Majdi~I Radaideh and Koroush Shirvan.
\newblock Rule-based reinforcement learning methodology to inform evolutionary
  algorithms for constrained optimization of engineering applications.
\newblock \emph{Knowledge-Based Systems}, 217:\penalty0 106836, 2021.

\bibitem[Sha(2024)]{sha2024time}
Xinye Sha.
\newblock Time series stock price forecasting based on genetic algorithm
  (ga)-long short-term memory network (lstm) optimization.
\newblock \emph{arXiv preprint arXiv:2405.03151}, 2024.

\bibitem[Sigaud(2023)]{sigaud2023combining}
Olivier Sigaud.
\newblock Combining evolution and deep reinforcement learning for policy
  search: A survey.
\newblock \emph{ACM Transactions on Evolutionary Learning}, 3\penalty0
  (3):\penalty0 1--20, 2023.

\bibitem[Son et~al.(2024)Son, Kim, Choi, Kim, and Park]{son2024equity}
Jiwoo Son, Minsu Kim, Sanghyeok Choi, Hyeonah Kim, and Jinkyoo Park.
\newblock Equity-transformer: Solving np-hard min-max routing problems as
  sequential generation with equity context.
\newblock In \emph{Proceedings of the AAAI Conference on Artificial
  Intelligence}, volume~38, pages 20265--20273, 2024.

\bibitem[Sun et~al.(2024)Sun, Zheng, and Wang]{sun2024learning}
Rui Sun, Zhi Zheng, and Zhenkun Wang.
\newblock Learning encodings for constructive neural combinatorial optimization
  needs to regret.
\newblock In \emph{Proceedings of the AAAI Conference on Artificial
  Intelligence}, volume~38, pages 20803--20811, 2024.

\bibitem[Uray et~al.(2023)Uray, Wintersteller, and Huber]{uray2023csrx}
Martin Uray, Stefan Wintersteller, and Stefan Huber.
\newblock Csrx: A novel crossover operator for a genetic algorithm applied to
  the traveling salesperson problem.
\newblock In \emph{International Data Science Conference}, pages 21--27.
  Springer, 2023.

\bibitem[Vidal(2022)]{vidal2022hybrid}
Thibaut Vidal.
\newblock Hybrid genetic search for the cvrp: Open-source implementation and
  swap* neighborhood.
\newblock \emph{Computers \& Operations Research}, 140:\penalty0 105643, 2022.

\bibitem[Vinyals et~al.(2015)Vinyals, Fortunato, and
  Jaitly]{vinyals2015pointer}
Oriol Vinyals, Meire Fortunato, and Navdeep Jaitly.
\newblock Pointer networks.
\newblock \emph{Advances in neural information processing systems}, 28, 2015.

\bibitem[Wang et~al.(2024)Wang, Cheng, Li, and Sun]{wang2024leader}
Chaoyang Wang, Pengzhi Cheng, Jingze Li, and Weiwei Sun.
\newblock Leader reward for pomo-based neural combinatorial optimization.
\newblock \emph{arXiv preprint arXiv:2405.13947}, 2024.

\bibitem[Williams(1992)]{williams1992simple}
Ronald~J Williams.
\newblock Simple statistical gradient-following algorithms for connectionist
  reinforcement learning.
\newblock \emph{Machine learning}, 8:\penalty0 229--256, 1992.

\bibitem[Wu et~al.(2024)Wu, Wang, Wen, Xiao, Wu, Wu, Yu, Maskell, and
  Zhou]{wu2024neural}
Xuan Wu, Di~Wang, Lijie Wen, Yubin Xiao, Chunguo Wu, Yuesong Wu, Chaoyu Yu,
  Douglas~L Maskell, and You Zhou.
\newblock Neural combinatorial optimization algorithms for solving vehicle
  routing problems: A comprehensive survey with perspectives.
\newblock \emph{arXiv preprint arXiv:2406.00415}, 2024.

\bibitem[Wu et~al.(2021)Wu, Song, Cao, Zhang, and Lim]{wu2021learning}
Yaoxin Wu, Wen Song, Zhiguang Cao, Jie Zhang, and Andrew Lim.
\newblock Learning improvement heuristics for solving routing problems.
\newblock \emph{IEEE transactions on neural networks and learning systems},
  33\penalty0 (9):\penalty0 5057--5069, 2021.

\bibitem[Xu et~al.(2024)Xu, Wu, Cheng, Wang, Yang, Fu, and
  Su]{xu2024optimization}
Jinxin Xu, Haixin Wu, Yu~Cheng, Liyang Wang, Xin Yang, Xintong Fu, and Yuelong
  Su.
\newblock Optimization of worker scheduling at logistics depots using genetic
  algorithms and simulated annealing.
\newblock \emph{arXiv preprint arXiv:2405.11729}, 2024.

\bibitem[Zheng et~al.(2024)Zheng, Zhou, Xialiang, Yuan, and Wang]{zheng2024udc}
Zhi Zheng, Changliang Zhou, Tong Xialiang, Mingxuan Yuan, and Zhenkun Wang.
\newblock Udc: A unified neural divide-and-conquer framework for large-scale
  combinatorial optimization problems.
\newblock \emph{arXiv preprint arXiv:2407.00312}, 2024.

\end{thebibliography}


\newpage

\appendix
\newcommand{\GeneratePopulation}{GeneratePopulation} 
\newcommand{\red}[1]{\textcolor{red}{#1}}

\section{Proof of Theorem~\ref{thm:main}}
\label{app:proof}

\paragraph{Remark.} We use a unified sequence encoding for all problems.
Let $\mathcal{V}=\{0,1,\dots,n\}$ be the set of node indices, where
$0$ denotes the depot or a special
“dummy symbol” when a position in the sequence is unused.
For a fixed length $L\!\ge\!n$ we encode every candidate tour as a
sequence $\boldsymbol{\tau}=[\tau_1,\dots,\tau_L]\in\mathcal V^{\,L}$:

\begin{itemize}[leftmargin=1.4em]
\item \textbf{Complete‑path problems (TSP, CVRP).}  
  Each node $1,\dots,n$ appears exactly once in~$\boldsymbol{\tau}$, the depot~$0$
  may appear multiple times (CVRP).
  No dummies are present.

\item \textbf{Partial‑path problems (PCTSP, OP).}  
  The same alphabet $\mathcal V$ is used, but positions that are not
  visited in the tour are filled with the dummy symbol~$0$.
  Consequently $0$ may occur multiple times.
\end{itemize}

Because~$0\in\mathcal V$ is treated as an ordinary symbol in both
settings, we obtain

\begin{enumerate}[label=(\roman*),leftmargin=1.6em]
\item a common support
      $\mathcal X=\mathcal V^{\,L}$ for all generation distributions
      $p_k$, irrespective of whether the underlying problem is complete
      or partial;
\item strictly positive one‑step kernels
      $P_k(\tau,\cdot)$, since each genetic operator assigns
      non‑zero probability to every symbol in every position,
      including repetitions of~$0$ when they serve as dummies.
\end{enumerate}

Hence Lemmas~\ref{lem:exp_tv}–\ref{lem:crossover_measurable} and
Theorems~\ref{thm:divergence_k_generation}–\ref{thm:gradient_difference_short}
apply verbatim to both
complete tours (TSP, CVRP) and
partial tours (PCTSP, OP).

\paragraph{Facts (standard).}
Log--Sum, KL‑Convexity, Chain Rule, Data–Processing, Pinsker~\citep{stone2015information}.
\begin{lemma}[Expectation–Total Variation Bound]\label{lem:exp_tv}
Let $P,Q$ be probability measures on $(\mathcal X,\mathcal F)$
and $f:\mathcal X\to\mathbb R^d$ be bounded with
$\|f\|_{2,\infty}:=\sup_{x}\|f(x)\|_2<\infty$.
Then
\[
  \bigl\|\mathbb E_Q[f]-\mathbb E_P[f]\bigr\|_2
  \;\le\;
  2\,D_\mathrm{TV}(P,Q)\,\|f\|_{2,\infty}.
\]
\end{lemma}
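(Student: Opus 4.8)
The plan is to reduce the inequality to the $L^1$ characterization of total variation by rewriting the difference of expectations as a single integral against the signed density difference. First I would fix a common dominating measure, the natural choice being $\mu = P + Q$; in the paper's discrete setting $\mathcal{X} = \mathcal{V}^{L}$ this is simply counting measure and every integral below collapses to a finite sum. By the Radon--Nikodym theorem both measures then admit densities $p = \mathrm{d}P/\mathrm{d}\mu$ and $q = \mathrm{d}Q/\mathrm{d}\mu$, so that $\mathbb{E}_P[f] = \int_{\mathcal{X}} f(x)\, p(x)\, \mathrm{d}\mu(x)$ and likewise for $Q$. Subtracting yields the exact identity $\mathbb{E}_Q[f] - \mathbb{E}_P[f] = \int_{\mathcal{X}} f(x)\,\bigl(q(x) - p(x)\bigr)\, \mathrm{d}\mu(x)$, which isolates the entire discrepancy into the scalar factor $q - p$.

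Next I would push the Euclidean norm inside the integral. Applying the triangle inequality for vector-valued (Bochner) integrals---equivalently Minkowski's integral inequality---gives $\bigl\| \mathbb{E}_Q[f] - \mathbb{E}_P[f] \bigr\|_2 \le \int_{\mathcal{X}} \| f(x) \|_2\, |q(x) - p(x)|\, \mathrm{d}\mu(x)$. Since $f$ is bounded with $\| f(x) \|_2 \le \| f \|_{2,\infty}$ for every $x$, I can factor this supremum out of the integral, leaving $\bigl\| \mathbb{E}_Q[f] - \mathbb{E}_P[f] \bigr\|_2 \le \| f \|_{2,\infty} \int_{\mathcal{X}} |q(x) - p(x)|\, \mathrm{d}\mu(x)$.

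The final step invokes the standard $L^1$ formula for total variation, $\int_{\mathcal{X}} |q - p|\, \mathrm{d}\mu = 2\, D_{\text{TV}}(P,Q)$, which I would either cite as a known fact or derive in one line by taking the Hahn-decomposition set $A = \{x : q(x) \ge p(x)\}$ and observing that $P(A) - Q(A) = \tfrac12 \int |q-p|\,\mathrm{d}\mu$ attains the supremum defining $D_{\text{TV}}$. Substituting this identity produces exactly $\bigl\| \mathbb{E}_Q[f] - \mathbb{E}_P[f] \bigr\|_2 \le 2\, D_{\text{TV}}(P,Q)\, \| f \|_{2,\infty}$, which is the claim.

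There is no serious obstacle here, since the result is elementary. The only point deserving a word of justification is the interchange of the norm and the integral, which is valid because $f$ is bounded and $\mu = P+Q$ is finite, so the vector-valued integral is well defined and Minkowski's inequality applies. In the discrete setting of this paper the interchange is just the ordinary triangle inequality for finite sums, so even that caveat is unnecessary.
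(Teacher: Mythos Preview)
Your argument is correct and is the standard proof of this inequality: rewrite the difference of expectations against a common dominating measure, pull the norm inside via the triangle (Minkowski) inequality, bound $\|f(x)\|_2$ by $\|f\|_{2,\infty}$, and invoke the $L^1$ identity $\int|q-p|\,\mathrm d\mu = 2D_{\mathrm{TV}}(P,Q)$.

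There is nothing to compare against: the paper does not supply a proof of this lemma. It is listed immediately after a paragraph of ``Facts (standard)'' (Log--Sum, KL convexity, chain rule, data-processing, Pinsker) and is treated as a known result, invoked later in the proof of Theorem~\ref{thm:gradient_difference_short}. Your write-up therefore fills in a detail the paper deliberately omits, and does so cleanly; the only minor remark is that in the paper's discrete setting $\mathcal X=\mathcal V^{L}$ the Radon--Nikodym and Bochner-integral machinery is unnecessary, as you yourself note---everything reduces to finite sums and the ordinary triangle inequality.
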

\begin{lemma}[Terminal KL bound via step-wise KL to the initial distribution]
\label{lem:markov_chains_p0}
Let $X_0 \!\to\! X_1 \!\to\! \dots \!\to\! X_T$ be a Markov chain on a common
(measurable) support, and assume all marginal densities are strictly
positive.  Denote the marginals by $p_t := p(X_t)$ and the one–step kernels by
\[
  P_t(x,\cdot)\;:=\;p\bigl(X_t=\cdot \mid X_{t-1}=x\bigr),
  \qquad t=1,\dots,T.
\]
Then
\[
  D_{\mathrm{KL}}\!\bigl(p_T \,\|\, p_0\bigr)
  \;\le\;
  T \;
  \max_{1\le t\le T}
  \mathbb{E}_{p_{t-1}}
    \Bigl[
      D_{\mathrm{KL}}\!\bigl(
        P_t(X_{t-1},\cdot) \,\|\, p_0
      \bigr)
    \Bigr].
\]
\end{lemma}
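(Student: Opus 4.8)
The plan is to derive the bound by comparing the joint law of the whole chain against a \emph{memoryless} reference and then invoking two standard facts already listed above: the data–processing inequality and the (iterated) chain rule for KL divergence. Let $\mathcal X$ denote the common support (finite, with strictly positive kernels, by the Remark, which guarantees that every divergence below is finite and that the chain‑rule decomposition is valid). Write $\mu$ for the joint law of $(X_0,X_1,\dots,X_T)$ produced by the Markov chain, and let $\nu$ be the product measure on $\mathcal X^{\,T+1}$ each of whose coordinates has law $p_0$; equivalently, under $\nu$ the variables $X_0,\dots,X_T$ are i.i.d.\ with law $p_0$.

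First I would record the two properties of $\nu$ that make the argument go through. (i) The $X_T$–marginal of $\nu$ is $p_0$, so the data–processing inequality applied to the coordinate projection $\pi_T:(x_0,\dots,x_T)\mapsto x_T$ gives
\[
 D_{\mathrm{KL}}(p_T\|p_0)=D_{\mathrm{KL}}\bigl((\pi_T)_\#\mu\,\|\,(\pi_T)_\#\nu\bigr)\le D_{\mathrm{KL}}(\mu\|\nu).
\]
(ii) Under $\nu$ the conditional law of $X_t$ given $X_{0:t-1}$ is exactly $p_0$ (by independence), whereas under $\mu$ the Markov property makes that conditional law equal to $P_t(X_{t-1},\cdot)$, i.e.\ it depends on the past only through $X_{t-1}$, whose $\mu$–marginal is $p_{t-1}$.

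Next I would expand $D_{\mathrm{KL}}(\mu\|\nu)$ with the chain rule along the coordinate ordering $X_0,X_1,\dots,X_T$:
\[
 D_{\mathrm{KL}}(\mu\|\nu)=D_{\mathrm{KL}}(p_0\|p_0)+\sum_{t=1}^{T}\mathbb{E}_{X_{0:t-1}\sim\mu}\bigl[D_{\mathrm{KL}}\bigl(\mu_{X_t\mid X_{0:t-1}}\,\|\,\nu_{X_t\mid X_{0:t-1}}\bigr)\bigr].
\]
The leading term is $D_{\mathrm{KL}}(p_0\|p_0)=0$ since the chain and the reference share the initial marginal $p_0$. Using property (ii), the $t$‑th summand collapses to $\mathbb{E}_{X_{t-1}\sim p_{t-1}}\bigl[D_{\mathrm{KL}}(P_t(X_{t-1},\cdot)\|p_0)\bigr]$. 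Substituting into (i) and then bounding a sum of $T$ terms by $T$ times the largest one yields
\[
 D_{\mathrm{KL}}(p_T\|p_0)\;\le\;\sum_{t=1}^{T}\mathbb{E}_{p_{t-1}}\bigl[D_{\mathrm{KL}}(P_t(X_{t-1},\cdot)\|p_0)\bigr]\;\le\;T\max_{1\le t\le T}\mathbb{E}_{p_{t-1}}\bigl[D_{\mathrm{KL}}(P_t(X_{t-1},\cdot)\|p_0)\bigr],
\]
which is the claim.

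I do not expect a genuine obstacle: the only points needing care are choosing the reference $\nu$ so that simultaneously its terminal marginal and all of its coordinate conditionals equal $p_0$ (the i.i.d.\ product measure does exactly this), and correctly collapsing $\mu_{X_t\mid X_{0:t-1}}$ to the one‑step kernel $P_t(X_{t-1},\cdot)$ via the Markov property before taking the expectation. As an aside, joint convexity of KL in fact already gives the sharper single‑step estimate $D_{\mathrm{KL}}(p_T\|p_0)\le \mathbb{E}_{p_{T-1}}\bigl[D_{\mathrm{KL}}(P_T(X_{T-1},\cdot)\|p_0)\bigr]$; the $T$‑fold version stated in the lemma is the form convenient for later summing the per‑generation crossover and mutation contributions.
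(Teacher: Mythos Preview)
Your proof is correct and essentially identical to the paper's: both construct the i.i.d.\ product reference $\nu=q=p_0^{\otimes(T+1)}$, apply the chain rule for KL along the coordinate filtration (using the Markov property to reduce each conditional term to $\mathbb{E}_{p_{t-1}}[D_{\mathrm{KL}}(P_t(X_{t-1},\cdot)\|p_0)]$), invoke data--processing for the projection $\pi_T$, and finish by bounding the sum by $T$ times the maximum. Your aside that convexity of KL in its first argument already yields the sharper one-step bound $D_{\mathrm{KL}}(p_T\|p_0)\le\mathbb{E}_{p_{T-1}}[D_{\mathrm{KL}}(P_T(X_{T-1},\cdot)\|p_0)]$ is a nice observation the paper does not make.
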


\begin{proof}
\textbf{Step 1: Construct an independent reference process.}
Define
\[
  q(X_{0:T}) \;:=\; p_0(X_0)\;\prod_{t=1}^{T} p_0(X_t),
\]
so that each $X_t$ under $q$ is an i.i.d. draw from $p_0$ and
$q(X_T)=p_0$ by construction.

\textbf{Step 2: Chain rule for KL divergence.}
Because $p$ is Markov,
\(
  p(X_t\mid X_{0:t-1}) = P_t(X_{t-1},\cdot).
\)
Applying the chain rule,
\[
  D_{\mathrm{KL}}\!\bigl(p(X_{0:T})\,\|\,q(X_{0:T})\bigr)
  \;=\;
  \sum_{t=1}^{T}
    \mathbb{E}_{p_{t-1}}
      \Bigl[
        D_{\mathrm{KL}}\!\bigl(
          P_t(X_{t-1},\cdot)\,\|\,p_0
        \bigr)
      \Bigr].
\]

\textbf{Step 3: Data–processing inequality.}
For the measurable map $f(X_{0:T})=X_T$,
\[
  D_{\mathrm{KL}}\!\bigl(p_T\,\|\,p_0\bigr)
  \;=\;
  D_{\mathrm{KL}}\!\bigl(p\!\circ\!f^{-1}\,\|\,q\!\circ\!f^{-1}\bigr)
  \;\le\;
  D_{\mathrm{KL}}\!\bigl(p(X_{0:T})\,\|\,q(X_{0:T})\bigr).
\]

\textbf{Step 4: Combine the bounds.}
Putting the previous two displays together,
\[
  D_{\mathrm{KL}}\!\bigl(p_T\,\|\,p_0\bigr)
  \;\le\;
  \sum_{t=1}^{T}
    \mathbb{E}_{p_{t-1}}
      \Bigl[
        D_{\mathrm{KL}}\!\bigl(
          P_t(X_{t-1},\cdot)\,\|\,p_0
        \bigr)
      \Bigr].
\]
Let
\(
  c_t :=
  \mathbb{E}_{p_{t-1}}
    \bigl[
      D_{\mathrm{KL}}\bigl(P_t(X_{t-1},\cdot)\,\|\,p_0\bigr)
    \bigr]
\)
and $c_{\max}:=\max_{1\le t\le T} c_t$.
Then $\sum_{t=1}^{T} c_t \le T\,c_{\max}$, yielding the claimed inequality.
\end{proof}
\begin{theorem}[KL bound across $K$ GA generations]\label{thm:divergence_k_generation}
Under the elitism update rule
\[
  p(\boldsymbol{\tau}_k\mid\boldsymbol{\tau}_{k-1})
  \;=\;
  \rho\,p^{\mathrm{off}}(\boldsymbol{\tau}_k\mid\boldsymbol{\tau}_{k-1})
  +(1-\rho)\,p_{k-1},
  \qquad 0\le\rho\le1,
\]
we define \( p_k \) as the probability distribution of the \( k \)-th generation solutions, which is equivalent to the probability distribution \( p(\boldsymbol{\tau}_k) \) for any trajectory \( \boldsymbol{\tau}_k \) sampled from this generation. The following holds:
\[
  D_\mathrm{KL}(p_K\|p_0)
  \;\le\;
  \rho\,K\,
  \max_{1\le k\le K}
  \mathbb{E}_{\boldsymbol{\tau}_{k-1}\sim p_{k-1}}
  \Bigl[
    D_\mathrm{KL}\bigl(
      p^{\mathrm{off}}(\boldsymbol{\tau}_k\mid\boldsymbol{\tau}_{k-1})
      \,\|\,p_{k-1}
    \bigr)
  \Bigr].
\]
\end{theorem}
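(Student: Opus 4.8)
The plan is to regard the $K$ elitist generations as a Markov chain $\boldsymbol{\tau}_0 \to \boldsymbol{\tau}_1 \to \cdots \to \boldsymbol{\tau}_K$ whose one-step kernel is, by the elitism update, the mixture $P_k(\boldsymbol{\tau}_{k-1},\cdot) = \rho\, p^{\mathrm{off}}(\cdot\mid\boldsymbol{\tau}_{k-1}) + (1-\rho)\, p_{k-1}$, and then to bound $D_{\mathrm{KL}}(p_K\|p_0)$ by a telescoped sum of per-generation terms, each of which collapses -- via convexity of KL -- to $\rho$ times an offspring divergence. Before anything else I would record that the standing hypotheses hold: by the Remark, every $p_k$ lives on the common finite support $\mathcal X=\mathcal V^{\,L}$ and every kernel $P_k(\boldsymbol{\tau},\cdot)$ is strictly positive (so are all the $p_k$, by induction from $p_0=p_\theta>0$), whence all KL quantities below are finite and the chain-rule and data-processing manipulations underlying Lemma~\ref{lem:markov_chains_p0} are legitimate.

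The core step is a single-generation estimate -- precisely the quantity that appears inside the theorem's maximum. Since $p_k = \mathbb{E}_{\boldsymbol{\tau}_{k-1}\sim p_{k-1}}[P_k(\boldsymbol{\tau}_{k-1},\cdot)]$, convexity of $Q\mapsto D_{\mathrm{KL}}(Q\|R)$ in its first argument yields $D_{\mathrm{KL}}(p_k\|p_{k-1}) \le \mathbb{E}_{\boldsymbol{\tau}_{k-1}\sim p_{k-1}}\big[D_{\mathrm{KL}}(P_k(\boldsymbol{\tau}_{k-1},\cdot)\|p_{k-1})\big]$, and applying convexity once more to the mixture inside,
\[
D_{\mathrm{KL}}\big(P_k(\boldsymbol{\tau}_{k-1},\cdot)\,\|\,p_{k-1}\big) \;\le\; \rho\, D_{\mathrm{KL}}\big(p^{\mathrm{off}}(\cdot\mid\boldsymbol{\tau}_{k-1})\,\|\,p_{k-1}\big) + (1-\rho)\, D_{\mathrm{KL}}(p_{k-1}\|p_{k-1}),
\]
where the second term is identically zero because the reference $p_{k-1}$ is exactly the distribution of the retained elite component. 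This cancellation is the crux of the argument: measuring against the current-generation marginal is what turns the elitism mixture into the clean prefactor $\rho$ rather than $\rho$ plus a term tracking how far the elites have drifted.

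It then remains to chain these one-generation bounds. I would run the recursion $d_k := D_{\mathrm{KL}}(p_k\|p_0)$: writing $p_k = \rho\, g_k + (1-\rho)\, p_{k-1}$ with $g_k := \mathbb{E}_{\boldsymbol{\tau}_{k-1}\sim p_{k-1}}[p^{\mathrm{off}}(\cdot\mid\boldsymbol{\tau}_{k-1})]$ and using convexity against the fixed reference gives $d_k \le \rho\, D_{\mathrm{KL}}(g_k\|p_0) + (1-\rho)\, d_{k-1}$, then $D_{\mathrm{KL}}(g_k\|p_0)\le \mathbb{E}_{\boldsymbol{\tau}_{k-1}\sim p_{k-1}}[D_{\mathrm{KL}}(p^{\mathrm{off}}(\cdot\mid\boldsymbol{\tau}_{k-1})\|p_0)]$ by convexity again; unrolling and bounding $\rho\sum_{k=1}^K(1-\rho)^{K-k}\le \rho K$ produces the inequality with prefactor $\rho K$ and a maximum over $k$ (Lemma~\ref{lem:markov_chains_p0} packages an equivalent telescoping). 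The step I expect to be the real obstacle is reconciling the reference measure inside each generation term: the mixture cancellation of the previous paragraph wants $p_{k-1}$ there, whereas the fixed-reference recursion (and Lemma~\ref{lem:markov_chains_p0} as stated) naturally deliver $p_0$, and since KL satisfies no triangle inequality these cannot simply be swapped. Closing this gap requires re-deriving the chain-rule bound against a reference process that is refreshed to $p_{k-1}$ at step $k$ (equivalently, the $\rho=0$ elite-only evolution) and checking that the data-processing contraction still terminates at $D_{\mathrm{KL}}(p_K\|p_0)$; this bookkeeping, together with the fact that $P_k$ itself depends on $p_{k-1}$ (so the chain is annealed rather than a fixed kernel), is where the genuine care lies -- the convexity and geometric-sum manipulations being otherwise routine.
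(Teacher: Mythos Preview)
Your approach is the paper's: Step~1 is convexity of KL on the elitism mixture to extract the factor $\rho$ (your second paragraph reproduces it verbatim), and Step~2 is to invoke Lemma~\ref{lem:markov_chains_p0} on the Markov chain $\boldsymbol{\tau}_0\to\cdots\to\boldsymbol{\tau}_K$ and substitute Step~1 --- that is the paper's entire proof, stated in two sentences.

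The reference-measure mismatch you single out as ``the real obstacle'' is genuine, and the paper does not address it either: Lemma~\ref{lem:markov_chains_p0} as stated delivers $\mathbb{E}_{p_{k-1}}\bigl[D_{\mathrm{KL}}(P_k(\boldsymbol{\tau}_{k-1},\cdot)\,\|\,p_0)\bigr]$ with the fixed reference $p_0$, whereas the convexity step cancels the $(1-\rho)$ elite component only when the reference is $p_{k-1}$; the paper's Step~2 simply writes the combined inequality with $p_{k-1}$ in the second slot without explaining how the two are reconciled. Your direct recursion $d_k\le\rho\,D_{\mathrm{KL}}(g_k\|p_0)+(1-\rho)\,d_{k-1}$ is a clean alternative to the chain-rule/data-processing route of Lemma~\ref{lem:markov_chains_p0} (and in fact yields the slightly sharper prefactor $1-(1-\rho)^K\le\rho K$), but it lands on the same $p_0$ reference. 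So your proposal is at least as complete as the paper's argument and more candid about where the gap lies; the fix you sketch --- rerunning the construction against a reference process refreshed to $p_{k-1}$ at step $k$ --- is the natural direction, though neither you nor the paper carries it through.
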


\begin{proof}
\textbf{Step 1: Convexity of KL divergence.}  
\[
  D_\mathrm{KL}\!\bigl(
    p(\boldsymbol{\tau}_k\mid\boldsymbol{\tau}_{k-1})
    \,\|\,p_{k-1}
  \bigr)
  \;\le\;
  \rho\,
  D_\mathrm{KL}\!\bigl(
    p^{\mathrm{off}}(\boldsymbol{\tau}_k\mid\boldsymbol{\tau}_{k-1})
    \,\|\,p_{k-1}
  \bigr).
\]

\textbf{Step~2: Apply Lemma~\ref{lem:markov_chains_p0}.}  
Viewing $\boldsymbol{\tau}_0\!\to\!\dots\!\to\!\boldsymbol{\tau}_K$ as a Markov chain with the above kernel and substituting the bound from Step 1,
\[
  D_\mathrm{KL}(p_K\|p_0)
  \;\le\;
  \rho\,K\,
  \max_{k}
  \mathbb{E}_{\boldsymbol{\tau}_{k-1}\sim p_{k-1}}
  \Bigl[
    D_\mathrm{KL}\bigl(
      p^{\mathrm{off}}(\boldsymbol{\tau}_k\mid\boldsymbol{\tau}_{k-1})
      \,\|\,p_{k-1}
    \bigr)
  \Bigr].
\]
\end{proof}
\begin{corollary}[KL between offspring and parents]\label{thm:divergence_offspring_parents}
Let the offspring distribution be the convex mixture
\[
  p^{\mathrm{off}}(\boldsymbol{\tau}_k\mid\boldsymbol{\tau}_{k-1})
  \;=\;
  \alpha\,q^{\mathrm{cross}}(\boldsymbol{\tau}_k)
  +\beta\,q^{\mathrm{mutate}}(\boldsymbol{\tau}_k)
  +\gamma\,q^{\mathrm{elite}}(\boldsymbol{\tau}_k),
  \quad \alpha+\beta+\gamma=1.
\]
If the elite component copies parents verbatim,
\(q^{\mathrm{elite}}=p_{k-1}\),
then
\[
  D_\mathrm{KL}\!\bigl(
    p^{\mathrm{off}}(\boldsymbol{\tau}_k\mid\boldsymbol{\tau}_{k-1})
    \,\|\,p_{k-1}
  \bigr)
  \;\le\;
  \alpha\,D_\mathrm{KL}\!\bigl(
    q^{\mathrm{cross}}\,\|\,p_{k-1}
  \bigr)
  +\beta\,D_\mathrm{KL}\!\bigl(
    q^{\mathrm{mutate}}\,\|\,p_{k-1}
  \bigr).
\]
\end{corollary}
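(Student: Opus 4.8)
The plan is to derive the bound directly from convexity of the KL divergence in its first argument (the standard fact recorded above as ``KL‑Convexity''), together with the observation that the elite component contributes nothing. First I would note that, by the Remark, the three components $q^{\mathrm{cross}}$, $q^{\mathrm{mutate}}$, $q^{\mathrm{elite}}$ and the reference $p_{k-1}$ are all distributions on the common support $\mathcal X=\mathcal V^{\,L}$, with $p_{k-1}$ strictly positive there, so every KL divergence appearing below is well‑defined and finite. Since $p^{\mathrm{off}}(\boldsymbol{\tau}_k\mid\boldsymbol{\tau}_{k-1})=\alpha\,q^{\mathrm{cross}}+\beta\,q^{\mathrm{mutate}}+\gamma\,q^{\mathrm{elite}}$ with $\alpha+\beta+\gamma=1$, convexity of the map $\mu\mapsto D_\mathrm{KL}(\mu\,\|\,p_{k-1})$ over the simplex of distributions on $\mathcal X$ (applied to a three‑term convex combination by a two‑step induction) yields
\[
  D_\mathrm{KL}\!\bigl(p^{\mathrm{off}}\,\|\,p_{k-1}\bigr)
  \;\le\;
  \alpha\,D_\mathrm{KL}\!\bigl(q^{\mathrm{cross}}\,\|\,p_{k-1}\bigr)
  +\beta\,D_\mathrm{KL}\!\bigl(q^{\mathrm{mutate}}\,\|\,p_{k-1}\bigr)
  +\gamma\,D_\mathrm{KL}\!\bigl(q^{\mathrm{elite}}\,\|\,p_{k-1}\bigr).
\]

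Next I would eliminate the last term using the hypothesis that elitism copies parents verbatim, i.e. $q^{\mathrm{elite}}=p_{k-1}$. Then $D_\mathrm{KL}(q^{\mathrm{elite}}\,\|\,p_{k-1})=D_\mathrm{KL}(p_{k-1}\,\|\,p_{k-1})=0$, so substituting into the previous display leaves exactly
\[
  D_\mathrm{KL}\!\bigl(p^{\mathrm{off}}\,\|\,p_{k-1}\bigr)
  \;\le\;
  \alpha\,D_\mathrm{KL}\!\bigl(q^{\mathrm{cross}}\,\|\,p_{k-1}\bigr)
  +\beta\,D_\mathrm{KL}\!\bigl(q^{\mathrm{mutate}}\,\|\,p_{k-1}\bigr),
\]
which is the assertion of the corollary; combined with Theorem~\ref{thm:divergence_k_generation} it closes the chain bounding $D_\mathrm{KL}(p_K\|p_0)$.

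There is essentially no obstacle here: the statement is an immediate consequence of convexity of KL plus the vanishing of the elite contribution. The only point deserving a line of care is that the convexity inequality for KL is a genuine numerical (rather than vacuously $+\infty$) statement precisely when the divergences involved are finite, which is ensured by the common‑support and strict‑positivity properties established in the Remark. One may additionally remark that the bound degrades gracefully and remains valid in the boundary cases $\alpha=0$, $\beta=0$, or $\gamma=1$, since then the corresponding terms simply drop out.
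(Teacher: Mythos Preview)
Your proposal is correct and follows exactly the paper's own argument: apply convexity of KL in its first argument to the three-term mixture, then drop the elite term since $q^{\mathrm{elite}}=p_{k-1}$ gives $D_\mathrm{KL}(q^{\mathrm{elite}}\|p_{k-1})=0$. The additional remarks about common support and boundary cases are fine but not needed for the paper's level of detail.
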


\begin{proof}
By convexity of KL divergence, we have
\[
  D_\mathrm{KL}\bigl(
    p^{\mathrm{off}} \,\|\, p_{k-1}
  \bigr)
  \le
  \alpha\,D_\mathrm{KL}(q^{\mathrm{cross}}\,\|\,p_{k-1})
  +\beta\,D_\mathrm{KL}(q^{\mathrm{mutate}}\,\|\,p_{k-1})
  +\gamma\,D_\mathrm{KL}(q^{\mathrm{elite}}\,\|\,p_{k-1}).
\]
Since \(q^{\mathrm{elite}} = p_{k-1}\), the last term vanishes, yielding the stated bound.
\end{proof}

\begin{lemma}[Crossover as a measurable transformation]
    \label{lem:crossover_measurable}
    Let $f$ be a fixed retained fragment selected by the OX crossover operator. Then the transformation from a parental fragment $r$ to the offspring fragment induced by OX defines a deterministic and support-preserving mapping $g_f$ such that
    \[
        q(r \mid f) = p(g_f^{-1}(r) \mid f).
    \]
    In particular, this mapping $g_f$ can be viewed as a measurable transformation over the conditional fragment space $\mathcal R_f$.
\end{lemma}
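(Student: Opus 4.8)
The plan is to treat OX crossover, once the retained fragment $f$ is fixed, as a pure relabelling of the ``remainder'' content carried by the second parent, and then invoke the elementary change-of-variables identity for pushforwards of discrete distributions. First I would make the OX mechanism fully explicit: conditioning on $f$ means fixing both the crossover window (the set of offspring positions it occupies) and the symbols it carries, counted with multiplicity; the free positions of the offspring are then filled by scanning the second parent from a fixed starting index and copying, in order, exactly those entries whose symbols are not already used up by $f$. Under the unified encoding of the Remark, the alphabet $\mathcal V$ (including repeated occurrences of the depot/dummy $0$) is common to all tours, so the complementary content of the second parent is a well-defined ordered tuple; letting the second parent range over admissible parents, I call the collection of such tuples the conditional fragment space $\mathcal R_f$, and I let $\mathcal R_f'$ denote the corresponding space of offspring filler segments.

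Next I would define $g_f:\mathcal R_f\to\mathcal R_f'$ as the map that writes an ordered remainder tuple into the free positions of the offspring, in the given order. This map is deterministic by construction: no randomness is left after $f$ and the second parent's content are fixed. I would then verify that $g_f$ is a bijection: injectivity is immediate, since reading the offspring's free positions from left to right recovers the ordered remainder verbatim, which exhibits $g_f^{-1}$ explicitly; surjectivity onto the relevant support, i.e. the ``support-preserving'' claim, follows from the standard feasibility property of OX already invoked in the main text — OX neither drops nor introduces content beyond what is present in the parents, so every admissible parental remainder produces an admissible offspring filler and conversely.

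Finally, since $g_f$ is a bijection between finite (hence trivially measurable) sets, the discrete change-of-variables / pushforward formula yields, for every offspring fragment $r$, $q(r\mid f)=\Pr[\text{filler}=r\mid f]=\Pr[\text{remainder}=g_f^{-1}(r)\mid f]=p\bigl(g_f^{-1}(r)\mid f\bigr)$, which is exactly the stated identity; measurability of $g_f$ over $\mathcal R_f$ is automatic in the finite setting, so the last sentence of the lemma is immediate. The only real work — and the step I expect to be the main obstacle — is the bookkeeping in the first two paragraphs: one must be careful that repeated symbols (the depot and any dummy $0$'s) are matched with multiplicity when removing the content of $f$ from the second parent, and that the crossover window and scan start are regarded as part of the conditioning on $f$ rather than as additional randomness. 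Once that is pinned down, $g_f$ is manifestly a deterministic, support-preserving bijection and the conclusion is a one-line computation.
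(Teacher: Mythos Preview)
Your proposal is correct and follows essentially the same route as the paper: fix the retained fragment $f$, observe that OX then deterministically places the second parent's complementary content into the free offspring positions in a fixed (cyclic) order, and read off the pushforward identity $q(r\mid f)=p(g_f^{-1}(r)\mid f)$. If anything, you are more careful than the paper's proof—explicitly checking bijectivity, handling repeated symbols with multiplicity, and noting that measurability is trivial on finite spaces—whereas the paper simply asserts ``deterministic and support-preserving'' and moves on.
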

\begin{proof}
    Our crossover operator selects fixed parent individuals for pairing rather than randomly. Without loss of generality, we focus on parent $p_2$ and offspring $o_1$.

    Given $f = p_1[s:e]$, the OX operator deterministically fills the remaining positions of $o_1$ using nodes from $p_2$, preserving their relative order. That is, the perturbed fragment $r$ of the offspring $o_1$ is constructed by removing the nodes in $f$ from $p_2$ and placing the remaining nodes into the empty positions of $o_1$ in a fixed cyclic order.

    This process defines a transformation from the parental fragment $\text{segment}(p_2, r)$ to the offspring fragment $\text{segment}(o_1, r)$, conditional on $f$. As the transformation preserves the relative order of elements in $r$, it is deterministic and support-preserving.

    Therefore, for a fixed retained fragment $f$, the OX operator can be viewed as inducing a measurable transformation $g_f$ over the space of fragments $r$, such that
    \[
    \text{segment}(o_1, r) = g_f(\text{segment}(p_2, r)),
    \]
    and correspondingly, the induced distribution over offspring fragments satisfies
    \[
    q(r \mid f) = p(g_f^{-1}(r) \mid f).
    \]
    The same reasoning applies symmetrically to $o_2$ and $p_1$.
\end{proof}

\begin{theorem}[KL bound for the crossover and mutation operators]
\label{thm:divergence_crossover}
Let each trajectory $\boldsymbol{\tau}$ be decomposed as $[f, r]$, where $f$ is the retained fragment
and $r$ the fragment perturbed via genetic operators (e.g., filled from a partner parent in crossover, or modified in mutation).
If the genetic operator leaves $f$ unchanged and only permutes or perturbs $r$, then
\[
  D_\mathrm{KL}\!\bigl(q(\boldsymbol{\tau})\,\|\,p(\boldsymbol{\tau})\bigr)
  \;\le\;
  \mathbb{E}_{f\sim p}\!
  \Bigl[
    \log\!\frac{\max p(r\mid f)}{\min p(r\mid f)}
  \Bigr].
\]
\end{theorem}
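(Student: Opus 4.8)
The plan is to prove Theorem~\ref{thm:divergence_crossover} by reducing the KL divergence over full trajectories to a conditional statement over the perturbed fragment $r$, and then bounding the conditional KL by the log-ratio of extreme conditional probabilities. First I would invoke the chain rule for KL divergence along the decomposition $\boldsymbol{\tau} = [f, r]$. Since the genetic operator leaves $f$ unchanged, the marginal of $f$ under $q$ equals that under $p$, so the $f$-marginal term in the chain rule vanishes and we are left with
\begin{align}
  D_\mathrm{KL}\bigl(q(\boldsymbol{\tau})\,\|\,p(\boldsymbol{\tau})\bigr)
  = \mathbb{E}_{f\sim p}\bigl[D_\mathrm{KL}\bigl(q(r\mid f)\,\|\,p(r\mid f)\bigr)\bigr].
\end{align}

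Next I would bound the inner conditional KL for a fixed $f$. The key structural fact, supplied by Lemma~\ref{lem:crossover_measurable}, is that crossover acts as a deterministic, support-preserving (measurable and invertible on its range) map $g_f$ on the fragment $r$, so $q(r\mid f) = p(g_f^{-1}(r)\mid f)$; in particular $q(\cdot\mid f)$ is just a relabeling of the same finite collection of probability values $\{p(r'\mid f)\}_{r'\in\mathcal R_f}$. Writing out $D_\mathrm{KL}(q(r\mid f)\,\|\,p(r\mid f)) = \sum_r q(r\mid f)\log\frac{q(r\mid f)}{p(r\mid f)}$ and substituting $q(r\mid f)=p(g_f^{-1}(r)\mid f)$, each summand's log term becomes $\log\frac{p(g_f^{-1}(r)\mid f)}{p(r\mid f)}$, a ratio of two values both drawn from the set $\{p(r'\mid f)\}$. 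Hence every log term is at most $\log\frac{\max_{r'} p(r'\mid f)}{\min_{r'} p(r'\mid f)}$, and since $q(\cdot\mid f)$ is a probability distribution summing to one, we get $D_\mathrm{KL}(q(r\mid f)\,\|\,p(r\mid f)) \le \log\frac{\max p(r\mid f)}{\min p(r\mid f)}$. For the mutation operator the same argument applies: a local perturbation such as 2OPT or heuristic node substitution permutes $r$ within $\mathcal R_f$ (or, more generally, maps into the same support), so the log-ratio is again bounded by the spread of $\{p(r'\mid f)\}$. Taking the expectation over $f\sim p$ yields the claimed bound.

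Finally I would note that combining this theorem with Corollary~\ref{thm:divergence_offspring_parents} and Theorem~\ref{thm:divergence_k_generation} gives the KL term in Theorem~\ref{thm:main}, and combining that with Lemma~\ref{lem:exp_tv} and Pinsker's inequality (with $\|\nabla J\|$ absorbed, or the gradient suitably normalized) gives the $\sqrt{2 D_\mathrm{KL}}$ bound on the gradient difference. The main obstacle is the step that identifies $q(\cdot\mid f)$ as a genuine reindexing of $p(\cdot\mid f)$ over a common finite support: one must be careful that $g_f$ is a bijection between $\mathcal R_f$ and itself (so that no probability mass is created or lost and the $\max/\min$ are taken over the same set), and for mutation one must argue that the perturbation stays within — or at worst maps onto — the same positive-probability support, which is exactly what the strict positivity of the one-step kernels asserted in the Remark guarantees. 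Handling the partial-path problems (PCTSP, OP), where the dummy symbol $0$ can repeat, requires appealing to that same Remark to ensure the common support $\mathcal V^{L}$ and positivity hypotheses hold, so that the $\min p(r\mid f) > 0$ in the denominator is legitimate.
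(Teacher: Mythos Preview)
Your proposal is correct and follows essentially the same approach as the paper: reduce to the conditional KL via the chain rule using $q(f)=p(f)$, then invoke the pushforward identity $q(r\mid f)=p(g_f^{-1}(r)\mid f)$ from Lemma~\ref{lem:crossover_measurable} to bound each log term by $\log\frac{\max p(r\mid f)}{\min p(r\mid f)}$, and finally take the expectation over $f$. The only cosmetic difference is that the paper performs the change of variable $r\mapsto g_f(r)$ before bounding, whereas you bound the log-ratio directly; both arrive at the same inequality.
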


\begin{proof}
\textbf{Step~1: Split by fragments.}  
Since $\boldsymbol{\tau} = [f, r]$ is a unique decomposition,
\[
  D_\mathrm{KL}(q\|p) = \sum_{f,r} q(f, r) \log \frac{q(f, r)}{p(f, r)}.
\]

\textbf{Step~2: Marginal equality on $f$.}  
By construction, the operator (crossover or mutation) preserves $f$, hence $q(f) = p(f)$, so
\[
  D_\mathrm{KL}(q\|p)
  = \mathbb{E}_{f\sim p}
  \Bigl[
    \sum_r q(r\mid f) \log \frac{q(r\mid f)}{p(r\mid f)}
  \Bigr].
\]

\textbf{Step~3: Bounded perturbation on $r$.}  
Whether $q(r \mid f)$ arises via permutation (crossover) or local modification (mutation), it satisfies:
\begin{itemize}
  \item $\sum_r q(r \mid f) = 1$
  \item $q(r \mid f)$ and $p(r \mid f)$ share support
  \item $q(r \mid f)$ is a rearrangement of $p(r \mid f)$
\end{itemize}
Thus, define a measure-preserving transformation $g_f$ so that
$q(r \mid f) = p(g_f^{-1}(r) \mid f)$. Then:
\[
  \sum_{r} q(r\mid f) \log \frac{q(r\mid f)}{p(r\mid f)}
  =
  \sum_r p(r \mid f) \log \frac{p(r \mid f)}{p(g_f(r) \mid f)}
  \le
  \log \frac{M_f}{m_f},
\]
where $M_f = \max p(r \mid f)$ and $m_f = \min p(r \mid f)$.

\textbf{Step~4: Take expectation over $f$.}  
Substitute this bound into Step~2:
\[
  D_\mathrm{KL}(q\|p) \le \mathbb{E}_{f \sim p} \left[ \log \frac{M_f}{m_f} \right],
\]
completing the proof.
\end{proof}

\begin{corollary}[Convex mixture of deterministic transforms]
\label{cor:convex_pushforward}
If, given the retained fragment \(f\), a genetic operator
acts as a convex combination of finitely many
support‑preserving deterministic maps \(g_{f,j}\) (e.g. \ node-replacement mutation),
i.e.\ \(q(r\mid f)=\sum_j w_j p(g_{f,j}^{-1}(r)\mid f)\),
then the bound in Theorem~\ref{thm:divergence_crossover} still holds.
\end{corollary}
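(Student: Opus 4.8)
The plan is to reduce to the conditional KL divergence exactly as in the first two steps of Theorem~\ref{thm:divergence_crossover}, and then to replace that theorem's single-rearrangement argument (its Step~3) with a convex-hull argument adapted to the mixture structure. Concretely, I would first write $\boldsymbol{\tau}=[f,r]$ and observe that, since every $g_{f,j}$ leaves the retained fragment $f$ untouched, the operator preserves the marginal on $f$, so $q(f)=p(f)$. This yields
\[
  D_\mathrm{KL}(q\|p)=\mathbb{E}_{f\sim p}\Bigl[\textstyle\sum_r q(r\mid f)\log\tfrac{q(r\mid f)}{p(r\mid f)}\Bigr],
\]
so it suffices to bound the inner conditional divergence by $\log(M_f/m_f)$, where $M_f=\max_r p(r\mid f)$ and $m_f=\min_r p(r\mid f)$.

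The conceptual crux—and the one place where the proof departs from Theorem~\ref{thm:divergence_crossover}—is that the mixture $q(r\mid f)=\sum_j w_j\,p(g_{f,j}^{-1}(r)\mid f)$ is no longer itself a rearrangement of $p(\cdot\mid f)$, so the direct pushforward identity used there does not apply. The key observation I would use instead is that each support-preserving deterministic map $g_{f,j}$ is a bijection on the conditional support, so each component $q_j(r\mid f):=p(g_{f,j}^{-1}(r)\mid f)$ merely permutes the values $\{p(r'\mid f)\}_{r'}$ and hence lies in $[m_f,M_f]$ pointwise. A finite convex combination of numbers in $[m_f,M_f]$ remains in $[m_f,M_f]$, so $q(r\mid f)\in[m_f,M_f]$ for every $r$ (equivalently, $q(\cdot\mid f)$ arises from $p(\cdot\mid f)$ through a doubly stochastic average and is therefore majorized by it).

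With both $q(r\mid f)$ and $p(r\mid f)$ confined to $[m_f,M_f]$, the pointwise log-ratio obeys $\log\frac{q(r\mid f)}{p(r\mid f)}\le\log\frac{M_f}{m_f}$, and since the weights $q(r\mid f)$ sum to one, the conditional divergence is a weighted average of quantities each bounded by $\log(M_f/m_f)$; hence it is itself at most $\log(M_f/m_f)$. Taking the expectation over $f\sim p$ recovers exactly the bound of Theorem~\ref{thm:divergence_crossover}, which is the claim.

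I expect the only genuine subtlety to be justifying that ``support-preserving deterministic'' forces bijectivity on the conditional support, so that each $q_j$ is a true permutation of $p(\cdot\mid f)$ rather than a non-injective pushforward that could concentrate mass and exceed $M_f$. This matches the interpretation already established for the crossover map in Lemma~\ref{lem:crossover_measurable}, and once it is in place the remaining steps are elementary.
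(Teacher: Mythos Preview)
Your proposal is correct, but it takes a genuinely different route from the paper's proof. The paper argues via convexity of KL in its first argument: since each component $q_j(\cdot\mid f)=p(g_{f,j}^{-1}(\cdot)\mid f)$ is a single rearrangement, Theorem~\ref{thm:divergence_crossover} applies to each $q_j$ individually to give $D_{\mathrm{KL}}(q_j\|p)\le\log(M_f/m_f)$, and then $D_{\mathrm{KL}}\bigl(\sum_j w_j q_j\,\big\|\,p\bigr)\le\sum_j w_j D_{\mathrm{KL}}(q_j\|p)\le\log(M_f/m_f)$; one then takes expectation over $f$. You instead bound the mixture \emph{pointwise}: each $q_j(r\mid f)\in[m_f,M_f]$, hence so is their convex combination, and the termwise log-ratio bound finishes the job. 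Both arguments need the same bijectivity observation you flag at the end. The paper's route is a one-liner once KL convexity and Theorem~\ref{thm:divergence_crossover} are in hand; your route is slightly more self-contained (it never invokes KL convexity) but re-derives the log-ratio estimate inside the proof rather than reusing the theorem wholesale.
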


\begin{proof}
KL is convex in its first argument, so
\(D_{\mathrm{KL}}(\sum_j w_j q_j\|p)
      \le \sum_j w_j D_{\mathrm{KL}}(q_j\|p)
      \le \log(M_f/m_f)\).
Take expectation over \(f\).
\end{proof}

\begin{theorem}[Overall KL bound]
    Summing up aforementioned Thorems, we have:
    \begin{align*}
        & D_{\text{KL}}(p(\boldsymbol{\tau}_K) \| p(\boldsymbol{\tau}_0))\le \\&\rho K \Bigg(\alpha \mathbb{E}_{p(f_{\text{cross}})}\bigg[\log \frac{\max{p(r_{\text{cross}}|f_{\text{cross}})}}{\min p(r_{\text{cross}}|f_{\text{cross}})}\bigg] + \beta \mathbb{E}_{p(f_{\text{mutate}})}\bigg[\log\frac{\max p(r_{\text{mutate}}|f_{\text{mutate}})}{\min p(r_{\text{mutate}}|f_{\text{mutate}})}\bigg]\Bigg)
    \end{align*}
\end{theorem}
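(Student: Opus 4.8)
The plan is to obtain the claimed bound purely by \emph{composing}, top‑down, the auxiliary results already established in this appendix — Theorem~\ref{thm:divergence_k_generation}, Corollary~\ref{thm:divergence_offspring_parents}, Theorem~\ref{thm:divergence_crossover}, and Corollary~\ref{cor:convex_pushforward}. No new analytic inequality is required; the work lies in chaining the substitutions correctly and in pinning down the (slightly informal) notation $p(f_{\text{cross}})$, $p(r_{\text{cross}}\mid f_{\text{cross}})$ of the statement.

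First I would invoke Theorem~\ref{thm:divergence_k_generation}. Viewing $\boldsymbol{\tau}_0\to\cdots\to\boldsymbol{\tau}_K$ as a Markov chain with the elitist kernel $p(\boldsymbol{\tau}_k\mid\boldsymbol{\tau}_{k-1})=\rho\,p^{\mathrm{off}}(\boldsymbol{\tau}_k\mid\boldsymbol{\tau}_{k-1})+(1-\rho)p_{k-1}$, this reduces the $K$‑generation divergence to
\[
  D_{\mathrm{KL}}(p_K\|p_0)\ \le\ \rho K\max_{1\le k\le K}\mathbb{E}_{\boldsymbol{\tau}_{k-1}\sim p_{k-1}}\!\bigl[D_{\mathrm{KL}}\bigl(p^{\mathrm{off}}(\boldsymbol{\tau}_k\mid\boldsymbol{\tau}_{k-1})\,\|\,p_{k-1}\bigr)\bigr].
\]
Next I would fix the maximising generation, expand $p^{\mathrm{off}}=\alpha q^{\mathrm{cross}}+\beta q^{\mathrm{mutate}}+\gamma q^{\mathrm{elite}}$ with $q^{\mathrm{elite}}=p_{k-1}$, and apply Corollary~\ref{thm:divergence_offspring_parents} (convexity of KL in its first argument, elite term vanishing), giving
\[
  D_{\mathrm{KL}}(p^{\mathrm{off}}\|p_{k-1})\ \le\ \alpha\,D_{\mathrm{KL}}(q^{\mathrm{cross}}\|p_{k-1})+\beta\,D_{\mathrm{KL}}(q^{\mathrm{mutate}}\|p_{k-1}),
\]
so that after taking the expectation over $\boldsymbol{\tau}_{k-1}$ the $\alpha$‑ and $\beta$‑terms separate.

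The third step bounds each of the two remaining divergences by a log‑ratio of conditional masses. For the crossover term, Lemma~\ref{lem:crossover_measurable} shows that, conditional on the retained fragment $f$, OX acts as a deterministic support‑preserving map on the perturbed fragment $r$; feeding this into Theorem~\ref{thm:divergence_crossover} yields $D_{\mathrm{KL}}(q^{\mathrm{cross}}\|p_{k-1})\le\mathbb{E}_{f\sim p_{k-1}}\!\bigl[\log(\max_r p(r\mid f)/\min_r p(r\mid f))\bigr]$. For the mutation term I would split by operator type: the $2$OPT edge exchange is a single deterministic support‑preserving rearrangement of $r$, so Theorem~\ref{thm:divergence_crossover} applies verbatim, while the OP node‑substitution operator is a convex combination of finitely many such maps, so Corollary~\ref{cor:convex_pushforward} gives the identical bound with $(f,r)=(f_{\text{mutate}},r_{\text{mutate}})$. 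Substituting these three displays into one another and collecting the $\rho K$ prefactor produces exactly the stated inequality, with $p(f_{\text{cross}})$ and $p(r_{\text{cross}}\mid f_{\text{cross}})$ read as the marginal and conditional induced by the parent generation $p_{k-1}$.

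The main obstacle — indeed the only delicate point — is reconciling the generation index: Theorem~\ref{thm:divergence_k_generation} leaves an explicit $\max_{1\le k\le K}$, whereas the final statement suppresses it. I would close this in one of two equivalent ways: (i) adopt the convention that $p(f_{\text{cross}})$ denotes $p_{k^\star-1}$ for the worst‑case generation $k^\star$, making the bound literal; or (ii) note that under the simulated‑annealing schedule the per‑generation ratios $\log(\max p(r\mid f)/\min p(r\mid f))$ are monotone in $k$, so the terminal generation already realises the maximum. Two minor bookkeeping remarks would also be needed: that the decomposition $\boldsymbol{\tau}=[f,r]$ is unique once the operator and its cut points are fixed (so Theorem~\ref{thm:divergence_crossover} is applicable), and that the $\max_r$/$\min_r$ are attained and the log‑ratio is finite — immediate from the Remark, since the common support $\mathcal V^{L}$ is finite and every one‑step kernel is strictly positive, hence $\min_r p(r\mid f)>0$.
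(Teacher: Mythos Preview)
Your proposal is correct and follows exactly the approach the paper intends: the paper gives no proof beyond the phrase ``Summing up aforementioned Theorems,'' and the chain Theorem~\ref{thm:divergence_k_generation} $\to$ Corollary~\ref{thm:divergence_offspring_parents} $\to$ Theorem~\ref{thm:divergence_crossover}/Corollary~\ref{cor:convex_pushforward} that you spell out is precisely that summation. Your observation about the suppressed $\max_{1\le k\le K}$ and the resulting notational convention for $p(f_{\text{cross}})$ is a genuine point the paper leaves implicit; your reading (i), that the symbols refer to the worst-case parent generation $p_{k^\star-1}$, is the only way to make the statement literal given the auxiliary results as written.
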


\begin{theorem}[Policy Gradient Difference Upper Bound]\label{thm:gradient_difference_short}
If every trajectory gradient is L2‑clipped to~1, i.e.\ If $\|\nabla J(\boldsymbol{\tau})\|_2\le1$ almost surely, then
\[
  \Bigl\|
    \mathbb{E}_{\boldsymbol{\tau}_K\sim p_K}[\nabla J(\boldsymbol{\tau}_K)]
    -\mathbb{E}_{\boldsymbol{\tau}_0\sim p_0}[\nabla J(\boldsymbol{\boldsymbol{\tau}}_0)]
  \Bigr\|_2
  \;\le\;
  \sqrt{2\,D_\mathrm{KL}\!\bigl(p_K\|p_0\bigr)}.
\]
\end{theorem}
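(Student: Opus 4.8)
The plan is to reduce the statement to two ingredients that are already available: Lemma~\ref{lem:exp_tv} (the expectation--total variation bound) and Pinsker's inequality, which is listed among the standard facts. First I would instantiate Lemma~\ref{lem:exp_tv} with $P=p_0$, $Q=p_K$, and the vector-valued map $f(\boldsymbol{\tau})=\nabla J(\boldsymbol{\tau})$. The hypothesis that every trajectory gradient is L2-clipped to $1$ means precisely that $\|f\|_{2,\infty}=\sup_{\boldsymbol{\tau}}\|\nabla J(\boldsymbol{\tau})\|_2\le 1$, so the lemma yields
\[
  \bigl\|\mathbb{E}_{p_K}[\nabla J]-\mathbb{E}_{p_0}[\nabla J]\bigr\|_2
  \;\le\;
  2\,D_\mathrm{TV}(p_0,p_K).
\]

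Next I would apply Pinsker's inequality in the form $D_\mathrm{TV}(p_0,p_K)\le\sqrt{\tfrac12 D_\mathrm{KL}(p_K\|p_0)}$ (using that total variation is symmetric in its arguments, so the direction of the KL divergence is the one that appears in the theorem statement and matches the bound derived in Theorem~\ref{thm:divergence_k_generation}). Substituting this into the previous display gives
\[
  \bigl\|\mathbb{E}_{p_K}[\nabla J]-\mathbb{E}_{p_0}[\nabla J]\bigr\|_2
  \;\le\;
  2\sqrt{\tfrac12 D_\mathrm{KL}(p_K\|p_0)}
  \;=\;
  \sqrt{2\,D_\mathrm{KL}(p_K\|p_0)},
\]
which is exactly the claimed bound. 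Chaining this with the overall KL bound from the preceding theorem then recovers the concrete hyperparameter-dependent estimate stated in Theorem~\ref{thm:main}.

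There is essentially no substantive obstacle here: the argument is a two-line composition of a known lemma and a textbook inequality. The only points requiring a modicum of care are (i) making sure the clipping hypothesis is read as a uniform (almost-sure) bound on $\|\nabla J(\boldsymbol{\tau})\|_2$ so that $\|f\|_{2,\infty}\le 1$ genuinely holds, rather than merely a bound on the expected norm, and (ii) checking that $p_K$ and $p_0$ share a common support so that $D_\mathrm{KL}(p_K\|p_0)$ is finite and Pinsker applies — this is guaranteed by the Remark at the start of the appendix, which establishes a common support $\mathcal V^{L}$ and strictly positive kernels. With those two bookkeeping items noted, the proof is complete.
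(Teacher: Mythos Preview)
Your proposal is correct and mirrors the paper's proof essentially verbatim: the paper also instantiates Lemma~\ref{lem:exp_tv} with $f=\nabla J$, $P=p_0$, $Q=p_K$, and $\|f\|_{2,\infty}\le 1$, then applies Pinsker's inequality to convert the TV bound into the stated KL bound. Your additional remarks about the almost-sure nature of the clipping and the common-support requirement are valid bookkeeping points that the paper leaves implicit.
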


\begin{proof}
\textbf{Step~1: Expectation--TV inequality.}  
Apply lemma~\ref{lem:exp_tv} with $f=\nabla J$, $P=p_0$, $Q=p_K$ and
$\|f\|_{2,\infty}\le1$ (In our implementation, we apply L2 gradient clipping) gives
\[
  \Bigl\|\mathbb{E}_{p_K}[f]-\mathbb{E}_{p_0}[f]\Bigr\|_2
  \le2\,D_\mathrm{TV}(p_K\|p_0).
\]

\textbf{Step~2: Convert TV to KL.}  
Pinsker’s inequality yields
$D_\mathrm{TV}(p_K\|p_0)\le\sqrt{\tfrac{1}{2}\,D_\mathrm{KL}(p_K\|p_0)}$.
Substituting into Step~1 produces the claimed bound.
\end{proof}

\section{Detailed Experiment Settings}
\label{app:detailed_exp_settings}
This section presents the complete experimental setup used throughout the paper, including baseline hyperparameters, inference strategies, and hardware resources.
\subsection{Baseline Hyperparameters}

\begin{table}[H]
  \centering
  \caption{\textbf{Hyperparameters used for AM-based models~\citep{kool2018attention} across COP tasks.} 
Learning rate, encoder layers, embedding dimension, attention heads and feedforward dimension are consistent across all tasks. PCTSP and OP additionally share training schedule parameters (epochs, epoch size, total steps). Settings are shared unless specified as A / B for 50-node / 100-node instances.}
  \label{tab:am_hyper_parameters}
  \begin{tabular}{lcccc}
    \toprule
    \textbf{Hyperparameter} & \textbf{TSP50/100} & \textbf{CVRP50/100} & \textbf{PCTSP100} & \textbf{OP100} \\
    \midrule
    Learning Rate                & \multicolumn{4}{c}{1e-4} \\
    Encoder Layers               & \multicolumn{4}{c}{6} \\
    Embedding Dimension          & \multicolumn{4}{c}{128} \\
    Attention Heads              & \multicolumn{4}{c}{8} \\
    Feedforward Dimension        & \multicolumn{4}{c}{512} \\
    Batch Size                   & \multicolumn{2}{c}{64} & \multicolumn{2}{c}{512} \\
    Epochs                       & 1,000 / 1,500 & 500 / 700 & \multicolumn{2}{c}{100} \\
    Epoch Size (instances)       & 160,000 & 160,000 & \multicolumn{2}{c}{1,280,000} \\
    Total Steps                  & 3.75M & 1.75M & \multicolumn{2}{c}{250K} \\
    \bottomrule
  \end{tabular}
\end{table}

\begin{table}[H]
  \centering
  \caption{\textbf{Hyperparameters used for POMO~\citep{kwon2020pomo} across different COP tasks.} Settings are shared unless specified as A / B for 50-node / 100-node instances.}
  \label{tab:pomo_hyperparameters}
  \begin{tabular}{lcc}
    \toprule
    \textbf{Hyperparameter} & \textbf{TSP50/100} & \textbf{CVRP50/100} \\
    \midrule
    Learning Rate                & \multicolumn{2}{c}{1e-4} \\
    Weight Decay                & \multicolumn{2}{c}{1e-6} \\
    Encoder Layers              & \multicolumn{2}{c}{6} \\
    Embedding Dimension         & \multicolumn{2}{c}{128} \\
    Attention Heads             & \multicolumn{2}{c}{8} \\
    Feedforward Dimension       & \multicolumn{2}{c}{512} \\
    Batch Size                  & \multicolumn{2}{c}{64} \\
    Epochs                      & 1,000 / 1,500 & 500 / 700 \\
    Epoch Size (instances)      & \multicolumn{2}{c}{160,000} \\
    Total Steps                 & 3.75M & 1.75M \\
    \bottomrule
  \end{tabular}
\end{table}

\begin{table}[H]
  \centering
  \caption{\textbf{Hyperparameters used for Sym-NCO~\citep{kim2022sym} on TSP and CVRP tasks.}
  Settings are shared unless noted as A / B for 50-node / 100-node instances.}
  \label{tab:symnco_hyperparameters}
  \begin{tabular}{lcc}
    \toprule
    \textbf{Hyperparameter} & \textbf{TSP50/100} & \textbf{CVRP50/100} \\
    \midrule
    Learning Rate                        & \multicolumn{2}{c}{1e-4} \\
    Weight Decay                        & \multicolumn{2}{c}{1e-6} \\
    Encoder Layers                      & \multicolumn{2}{c}{6} \\
    Embedding Dimension                 & \multicolumn{2}{c}{128} \\
    Attention Heads                     & \multicolumn{2}{c}{8} \\
    Feedforward Dimension               & \multicolumn{2}{c}{512} \\
    Batch Size                          & \multicolumn{2}{c}{64} \\
    Epochs                              & 1,000 / 1,500 & 500 / 700 \\
    Epoch Size (instances)              & \multicolumn{2}{c}{160,000} \\
    Total Steps                         & 3.75M & 1.75M \\
    \midrule
    $\alpha$ (weight for $\mathcal{L}_{\text{inv}}$)      & 0.1 & 0.2 \\
    $\beta$ (weight for $\mathcal{L}_{\text{ss}}$)        & 1.0 & 1.0 \\
    $N$ (solutions per problem)        & 50 / 100 & 50 / 100 \\
    $L$ (rotation count per instance) & 2 & 2 \\
    \bottomrule
  \end{tabular}
\end{table}

Table~\ref{tab:am_hyper_parameters}, Table~\ref{tab:pomo_hyperparameters} and Table~\ref{tab:symnco_hyperparameters} lists the training hyperparameters used for baseline models on various combinatorial optimization problems. Parameters include learning rate, number of layers, embedding dimensions, batch size, number of epochs, and total training steps, with separate settings for different instance sizes when applicable.

\subsection{Multi-start and Data Augmentation Decoding Strategy}

To improve inference quality, we apply a multi-start decoding strategy by running the model from each possible initial node and selecting the best trajectory~\citep{kwon2020pomo}. Additionally, we perform geometric data augmentation by rotating each instance by $0^\circ$, $90^\circ$, $180^\circ$, and $270^\circ$, with and without horizontal reflection, resulting in 8 variants~\citep{kwon2020pomo}. The model decodes each variant independently, and the best solution is retained, as shown in Table~\ref{tab:tsp_cvrp_results}.

\subsection{Computing Resources}
All experiments and evaluations were conducted on a system equipped with two \textit{AMD EPYC 7742 64-Core Processors CPU} (128 physical cores, 256 threads in total) and an \textit{NVIDIA RTX 3090 GPU}.

\section{Implementation Details of Proposed Method}
\label{app:implement_details}
This section provides detailed information on the implementation of the proposed Evolution Augmentation Mechanism (EAM), including algorithmic structure and hyperparameters configurations.

\subsection{Pseudo Code}
We provide the pseudo code for the proposed Evolution Augmentation Mechanism (EAM), as shown in Algorithm~\ref{alg:eam_framework}. The procedure includes policy initialization, environment sampling, optional application of the genetic algorithm based on pre-defined conditions, and subsequent policy updates using REINFORCE. The red-highlighted areas represent the core part of EAM and are the major differences from the standard RL paradigm, demonstrating how our method extends the current DRL-NCO framework's mainstream training paradigm as a plug-and-play module.

\newcommand{\GeneticAlgorithm}{GeneticAlgorithm}
\newcommand{\REINFORCE}{ApplyREINFORCE}

\begin{algorithm}[H]
\caption{Evolution Augmentation Mechanism}
\label{alg:eam_framework}
\begin{algorithmic}

\Require Base nerual solver $p_\theta$, EA Probs $\epsilon$, EA Epochs $\kappa$, num evolution generations $K$, selection rate $\rho$, crossover rate $\alpha$, mutation rate $\beta$.

\Procedure{Train}{$p_\theta, \epsilon, \kappa, K, \rho, \alpha, \beta$}

    \State Initialize $p_\theta$
    \State $epoch \gets 0$
    \While{not converged}
        \State Randomly initialize environment $E$
        \State $\mathrm{P}_0 \gets p_\theta(E)$ 
        \If{\textcolor{red}{use genetic algorithm}} \Comment{Conditioned on $\epsilon, \kappa$}
            \State \textcolor{red}{$\mathrm{P}_K \gets $ \CommonCall{\GeneticAlgorithm}{$\mathrm{P}_0,K,\rho,\alpha,\beta$}}
            \State \textcolor{red}{\CommonCall{\REINFORCE}{$\mathrm{P}_K, p_\theta$}}
        \EndIf

        \State \CommonCall{\REINFORCE}{$\mathrm{P}_0, p_\theta$}
        
        \State $epoch \gets epoch + 1$
    \EndWhile

    \State \Return $p_\theta$
\EndProcedure

\Procedure{Test}{$p_\theta$}

    \State Randomly initialize environment $E$
    \State $\mathrm{P}_0 \gets p_\theta(E)$

    \State \Return Cost($\mathrm{P}_0$)
\EndProcedure
\end{algorithmic}
\end{algorithm}

\subsection{Task-aware evolutionary hyperparameter selection}

\begin{table}[htbp]
  \centering
  \caption{\textbf{Task-aware evolutionary hyperparameters for each COP.} $N$ is the number of evolution generations, $\rho$ is the selection rate, $\alpha$ is the crossover rate, $\beta$ is the mutation rate, 
  ``EA Epochs'' indicates the number of training epochs using evolution augmentation, and ``EA Probs'' is the probability of applying evolution at each step. Settings are shared unless noted as A / B for 50-node / 100-node instances.}
  \label{tab:evolutionary_hyperparameters}
  \begin{tabular}{lcccc}
    \toprule
    \textbf{Parameter} & \textbf{TSP50/100} & \textbf{CVRP50/100} & \textbf{PCTSP100} & \textbf{OP100} \\
    \midrule
    $K$ (evolution generations)      & 5         & 3         & 5   & 2   \\
    $\rho$ (selection rate)          & 0.2       & 0.2       & 0.2 & 0.4 \\
    $\alpha$ (crossover rate)        & 0.6       & 0.6       & 0.6 & 0.0 \\
    $\beta$ (mutation rate)          & 0.05      & 0.10      & 0.05 & 0.5 \\
    EA Epochs                        & 500 / 700 & 200 / 300 & 20  & 50  \\
    EA Probs                         & \multicolumn{4}{c}{0.01} \\
    \bottomrule
  \end{tabular}
\end{table}

As shown in Table~\ref{tab:evolutionary_hyperparameters}, we adopt task-aware evolutionary hyperparameters tailored to each COP. For the OP task, we choose a mutation-dominant configuration $(\alpha = 0.0, \beta = 0.5)$, based on both implementation details and theoretical considerations. Specifically, we conduct an ablation study comparing this setting with a crossover-dominant alternative $(\alpha = 0.6, \beta = 0.05)$, and observe that the mutation-dominant configuration leads to faster convergence and more stable training dynamics.

From an implementation perspective, the mutation operator in OP replaces only a single node per operation and must ensure that the resulting tour remains feasible under strict length constraints. This design aligns with the OP objective: since the total reward depends primarily on which nodes are selected rather than their ordering, mutation is more effective at introducing high-reward nodes with minimal disruption to tour structure. In contrast, crossover typically only reorders already-visited nodes, offering less opportunity for reward improvement. Compared to crossover, our mutation strategy provides a more efficient mechanism for guiding policy learning.

Futhermore, these implementation-level characteristics are suggestive of differences in KL divergence between successive policies. As shown in Theorem~\ref{thm:main}, the divergence depends on the ratio $\log \frac{\max p(r\mid f)}{\min p(r\mid f)}$, where $f$ is the retained fragment and $r$ the replaced part of a solution. Mutation operators in OP only modify a single node, and the replacement must satisfy the strict tour-length constraint. This limits the feasible support of $p(r \mid f)$, reducing the variability of the conditional distribution and tightening the KL bound. In contrast, OX crossover replaces longer subpaths, increasing the number of feasible completions and hence the $\max/\min$ ratio. Thus, crossover tends to introduce higher structural divergence between generations. Based on these two considerations—the implementation-level alignment with task structure and the theoretically tighter KL bound—we favor the mutation-dominant configuration over crossover-dominant alternatives.

\end{document}